\numberwithin{equation}{section}
\newtheorem{theorem}{\bf Theorem}[section]
\newtheorem{lemma}[theorem]{\bf Lemma}
\def\HH{\mathbb{H}}  
\def \bR {\Bbb R}
\def \bN {\Bbb N}
\def\DD{\mathbb{D}} 
\def\cA{{\cal{A}}}
\def\cD{{\cal{D}}}
\def\cG{{\cal{G}}}
\def\cI{{\cal{I}}}
\def\cN{{\cal{N}}}
\def\cP{{\cal{P}}}
\def\bbb{{\bf{b}}}
\def\bc{{\bf{c}}}
\def\ff{{\bf{f}}}
\def\bg{{\bf{g}}}
\def\be{{\bf{e}}}
\def\bx{{\bf{x}}}
\def\by{{\bf{y}}}
\def\b0{{\bf{0}}}
\def\bP{{\bf{P}}}
\def\bW{{\bf{W}}}
\newenvironment{proof}{\noindent{\em Proof:}}{\quad \hfill$\Box$\vspace{2ex}}
\begin{document}



\title{\bf Successive Affine Learning for Deep Neural Networks}
\author{Yuesheng Xu\thanks{Department of Mathematics and Statistics, Old Dominion University, Norfolk, VA 23529, USA. E-mail address: {\it y1xu@odu.edu}. }}
\date{}
\maketitle

\begin{abstract}

This paper introduces a successive affine learning (SAL) model for constructing deep neural networks (DNNs). Traditionally, a DNN is built by solving a non-convex optimization problem. It  is often challenging to solve such a problem numerically due to its non-convexity and having a large number of layers. To address this challenge, inspired by the human education system,  the multi-grade deep learning (MGDL) model was recently initiated by the author of this paper. The MGDL model learns a DNN in several grades, in each of which one constructs a shallow DNN consisting of a relatively small number of layers. The MGDL model still requires solving several non-convex optimization problems. The proposed SAL model mutates from the MGDL model. 
Noting that each layer of a DNN consists of an affine map followed by an activation function, we propose to learn the affine map by solving a quadratic/convex optimization problem which involves the activation function only {\it after} the weight matrix and the bias vector for the current layer have been trained. In the context of function approximation, for a given function the SAL model generates an expansion of the function with adaptive basis functions in the form of DNNs. We establish the Pythagorean identity and the Parseval identity for the system generated by the SAL model. Moreover, we provide a convergence theorem of the SAL process in the sense that either it terminates after a finite number of grades or the norms of its optimal error functions strictly decrease to a limit
as the grade number increases to infinity.
Furthermore, we present numerical examples of proof of concept which demonstrate that the proposed SAL model significantly outperforms the traditional deep learning model.
\end{abstract}

Keywords:
multi-grade learning, deep neural network, adaptive learning

\section{Introduction}
The goal of this paper is to introduce a successive affine learning (SAL) model for the construction of deep neural
networks (DNNs) for deep learning.
The great success of deep learning \cite{Goodfellow, LeCun} and its impact to science, technology and our society have been widely recognized \cite{DFVX, Douglas2023, Ida2019, Jiang, Krizhevsky, Raissi, DShen, Torlai, XuZeng2022}. Especially, the recently launched ChatGPT, based on the generative pre-trained transformer, has garnered attention for its detailed responses and articulate answers across many domains of knowledge \cite{Lock2022}.
The core of deep learning is to construct a deep neural network (DNN) as a prediction, decision function, and its successes are, to a great extent,  due to the mighty expressiveness of DNNs in representing a function \cite{Daubechies2022, Du2019, Poggio2017, Shen2, XuZhang2021, XuZhang2022}. 

In deep learning, a DNN is learned by solving an optimization problem which determines its parameters (weight matrices and bios vectors) that define it with an activation function. The optimization problem that learns a DNN is highly non-convex and has a large number of layers. Solving such an optimization problem  has been recognized as a major computational obstacle of deep learning. 
A commonly used method to solve the optimization problem is the stochastic gradient descent method \cite{Bottou1998, Bottou2012, Kingma}, with a choice of the initial guess proposed in \cite{He2015}. However, gradient-based optimization starting from random initialization appears to often get stuck in poor solutions \cite{Bengio}. Noting that the existing deep learning model uses a {\it single} optimization problem to train all layers of the DNN at one time, the more layers the DNN possesses, the severer the  non-convexity the resulting optimization problem is, and thus, the more difficulty one would encounter when trying to solve it. 


Inspired by human learning process which is often organized in grades, the multi-grade deep learning (MGDL) model was recently proposed in \cite{Xu2023} by the author of this paper, where DNNs were learned grade-by-grade. Instead of solving one {\it  single} optimization problem with a large number of layers, with the MGDL model we solve several optimization problems, each with a relatively small number of layers which determine a shallow neural network for a grade. The outcome of the MGDL model is a DNN with a structure different from the one learned by the single-grade learning but with a comparable approximation accuracy. Often, it is easier to learn several shallow neural networks than a deep one. The MGDL model reduces the complexity and alleviates the difficulty, of learning DNNs by the single-grade learning model. 

The current paper continues the general theme of \cite{Xu2023}, with  bold advancements. In the SAL model to be proposed, every grade contains only one layer and {\it free} the activation function from the associated optimization problem for training the weight matrix and the bias vector of the layer, so that the resulting optimization problem to learn them becomes either quadratic or convex.  
The development of this model is inspirited by an ancient philosophical principle: ``One step at a time leads to thousands of miles'' (Xun Zi, 313 - 238 B.C., an ancient Chinese great thinker);  ``the great doesn’t happen through impulse alone, and is a succession of little things that are brought together''
(Vincent van Gogh).
At each of the small steps, we solve a quadratic/convex optimization problem for one layer and by accumulating many of such steps we end up building a DNN of many layers, which has excellent functional expressiveness. 
In particular, in the context of function approximation, we identify the convex optimization problem of a grade as the orthogonal projection of the error function of the previous grade onto a linear subspace determined by the neural network learned from the previous grades. This observation leads to establishment of theoretical justifications of the SAL model. 

A DNN learned by the SAL model is the superposition of all the neural networks learned in all grades. Each term of the superposition is the term learned in the previous grade composed with a new layer whose weight matrix and bias vector are learned in the current grade from the error function of the previous grade by a convex/quadratic optimization problem. The design of the SAL model takes the advantage of the structure of a layer: Each layer of a DNN consists of an affine map followed by an activation function. We then propose to learn the affine map, defined by the weight matrix and the bias vector, by solving a quadratic/convex optimization problem without involving the activation function of the present layer. Only after the weight matrix and the bias vector of the layer have been obtained, we apply the activation function of the layer. In this way, the resulting optimization problem for each layer is convex/quadratic.

The innovation of the SAL model lies on avoiding solving a non-convex optimization problem, instead solving only convex/quadratic optimization problems to learn affine maps. In this way, standard numerical methods such as the Nesterov algorithm \cite{Nesterov}, the conjugate gradient method and the preconditioned conjugate gradient method \cite{Golub} are applicable for solving the convex/quadratic optimization problems, leading to a more accurate, effective and efficient learning model, because these numerical optimization methods are all easy to implement. In particular, the SAL model overcomes the vanishing gradient issue from which training a standard DNN normally surfers. Moreover, the SAL model is particularly suitable for adaptive approximation. It is convenient to add a new grade to the neural network learned from the previous grades. More importantly, we establish rigorous mathematical foundation for functions generated by the SAL model. This makes the SAL model a practical useful tool with sound mathematical foundation, unlike the traditional DNN model which it is challenging to implement and is often a black-box in terms of mathematical analysis.
The theoretical results presented in this paper for the SAL model sheds light on ``harmonic analysis'' of
DNNs.


We organize this paper in nine sections. In section 2, we review the traditional single-grade learning and the recently proposed MGDL model for building DNNs. 
Section 3 describes the SAL model for both function approximation and data fitting. For simplicity of presentation, we present the basic idea in the special case when the weight matrices are square matrices. We discuss in section 4 the SAL model with the average pooling which allows the weight matrices to be non-square in order to increase the expressiveness of the resulting DNNs by increasing the number of neurons in a layer. Section 5 is devoted to theoretical analysis of the SAL model with the average pooling. We show that the DNN learned by the SAL model enjoys the nice properties such as the Pythagorean identity and the Parseval identity. In section 6, we address the smoothing issue related to the SAL model. We discuss in section 7 crucial issues related to implementation of the proposed SAL model. In section 8, we provide two proof of concept numerical examples. Finally, we make conclusive remarks in section 9.

\section{Deep Neural Networks: Single-Grade Learning vs Multi-Grade Learning}

In this section, we recall the definition of the standard deep learning model - the single-grade learning model, and review the multi-grade deep learning model introduced recently in \cite{Xu2023} by the author of this paper. 

A DNN is  a function $\ff:\bR^s\to\bR^t$ formed by compositions of vector-valued functions, each of which is defined by an activation function applied to an affine map, where $s$ and $t$ are positive integers.
Given a univariate function $\sigma: \bR\to\bR$,
a vector-valued function may be defined for $\bx:=[x_1, x_2,\dots, x_d]^\top\in\bR^d$ by 
\begin{equation}\label{activationF}
\sigma(\bx):=[\sigma(x_1),\dots,\sigma(x_d)]^\top.
\end{equation}
It is convenient to use compact notation for compositions of functions.
For $n$ vector-valued functions $f_k$, $k\in\bN_n$, where the range of $f_k$ is contained in the domain of $f_{k+1}$, for $k\in\bN_{n-1}$, we denote the consecutive composition of $f_k$, $k\in\bN_n$, by
\begin{equation}\label{consecutive_composition}
    \bigodot_{k=1}^n f_k:=f_n\circ f_{n-1}\circ\cdots\circ f_2\circ f_1,
\end{equation}
whose domain is that of $f_1$. 
Let $m_0:=s$ and $m_n:=t$.
Given $\bW_i\in\bR^{m_i\times m_{i-1}}$ and $\bbb_i\in\bR^{m_i}$, $i\in\bN_n$, 
a DNN is a function defined by
\begin{equation}\label{DNN}
    \cN_n(\bx):=\left(\bW_n\bigodot_{i=1}^{n-1} \sigma(\bW_i \cdot+\bbb_i)+\bbb_n\right)(\bx),\ \ \bx\in\bR^s.
\end{equation}
The $n$-th layer is the output layer.
Note that for each $i\in\bN_n$, $\bW_i \cdot+\bbb_i$ is an affine map.
From \eqref{DNN} and the definition \eqref{activationF}, a DNN can be defined recursively by
\begin{equation}\label{Step1}
    \cN_1(\bx):=\sigma(\bW_1 \bx+\bbb_1)
\end{equation}
and
\begin{equation}\label{Recursion}
    \cN_{k+1}(\bx)=\sigma(\bW_{k+1}\cN_k(\bx)+\bbb_{k+1}), \ \ \bx\in \bR^s, \ \ \mbox{for all} \ \ k\in \bN_{n-1},
\end{equation}
where for $k:=n-1$, $\sigma$ in \eqref{Recursion} is the identity map. Clearly, from \eqref{Step1} and \eqref{Recursion}, we observe that each layer of a DNN consists of an affine map followed by an activation function.

A DNN may be learned from given data. From $m$ pairs of given points $(\bx_i, \by_i)$, $i\in\bN_m:=\{1,2,\dots,m\}$, with $\bx_i\in\bR^s$ and $\by_i\in \bR^t$, one may learn a function $\ff:\bR^s\to\bR^t$, in a form of a DNN of $n$ layers composed of $n-1$ hidden layers and one output layer by determining $n$ weight matrices $\bW_k$ and bias vectors $\bbb_k$, $k\in\bN_{n}$, through one or more activation functions. 
Specifically,
one can learn a function 
\begin{equation}\label{TraditionalDNN}
  \cN_n(\bx):= \cN_n(\{\bW_j^*,\bbb_j^*\}_{j=1}^n;\bx),\ \ \bx\in\bR^s
\end{equation}
with the parameters given by 
\begin{align}\label{Basic-Min-Problem}
    \{\bW_j^*, \bbb_j^*\}_{j=1}^n
    :=    {\rm argmin}\left\{\sum_{k=1}^m\|\cN_n(\{\bW_j,\bbb_j\}_{j=1}^n; \bx_k)-\by_k\|_{\ell_2}^2: \bW_j\in\bR^{m_j\times m_{j-1}}, \bbb_j\in\bR^{m_j}, j\in\bN_{n}\right\},
\end{align}
where $\|\cdot\|_{\ell_2}$ denotes the Euclidean vector norm of $\bR^t$. 

The continuous version of the learning problem \eqref{Basic-Min-Problem} in the context of function approximation may be described as follows. Suppose that $\DD\subseteq\bR^s$ is a domain, and let $L_2(\DD)$ denote the usual Hilbert space of the square-integrable functions $g$ on $\DD$ with
$$
\|g\|_2:=\left(\int_\DD|g(\bx)|^2d\bx\right)^\frac12<+\infty.
$$
By $L_2(\DD,\bR^t)$ we denote the Hilbert space of the vector-valued functions $\bg:=[g_1, g_2,\dots, g_t]^\top: \DD\to\bR^t$ with $g_j\in L_2(\DD)$, $j\in\bN_t$. 
The inner-product and the norm of the space $L_2(\DD,\bR^t)$ are defined respectively, for $\ff,\bg\in L_2(\DD,\bR^t)$ by
$$
\left<\ff,\bg\right>:=\sum_{j=1}^t\int_\DD f_j(\bx)g_j(\bx)d\bx
$$
and 
$$
\|\bg\|:=\left[\sum_{j=1}^t \|g_j\|_2^2\right]^\frac{1}{2}.
$$
Given a function $\ff\in L_2(\DD,\bR^t)$, we wish to learn a DNN $\cN_n$ in the form of \eqref{TraditionalDNN} with the parameters given by
\begin{equation}\label{Basic-Min-Problem-cont}
\{\bW_j^*,\bbb_j^*\}_{j=1}^n:=    {\rm argmin}\left\{\|\ff(\cdot)-\cN_n(\{\bW_j,\bbb_j\}_{j=1}^n;\cdot)\|^2: \bW_j\in\bR^{m_j\times m_{j-1}}, \bbb_j\in\bR^{m_j}, j\in\bN_{n}\right\}.
\end{equation}
Clearly, the function $\cN_n(\{\bW_j,\bbb_j\}_{j=1}^n;\cdot)$ is a best approximation to the given function $\ff$ from the non-convex set $\Omega_n$ of DNNs having the form \eqref{DNN}. 

Learning a DNN from either discrete data  or a continuous function requires to solve minimization problem \eqref{Basic-Min-Problem} or \eqref{Basic-Min-Problem-cont}. Both minimization problems \eqref{Basic-Min-Problem} and \eqref{Basic-Min-Problem-cont} are {\it single-grade} learning models. Such a learning model learns all weight matrices and bias vectors by solving a single optimization problem, which is a highly non-convex problem and is challenging to solve. A multi-grade deep learning model was recently put forward in \cite{Xu2023} to alleviate the difficulty in learning all parameters of a single-grade deep learning model.

We now recall the $l$-grade learning model introduced in \cite{Xu2023} for learning DNNs from a continuous function $\ff\in L_2(\DD,\bR^t)$, for $l\in \bN_n$. We choose $k_j\in \bN$, for $j\in\bN_l$, so that $\sum_{j=1}^lk_j=n-1$, and for each $k_j$, we choose a set of matrix widths $\{m_k: k=0,1,\dots,k_j\}$, which may be different for different $k_j$, and $m_{k_j}=t$. Note that each integer $k_j$ is relatively small in comparison to $n$.
The first grade is to learn the neural network $\cN_{k_1}$ having the form of \eqref{DNN} with $n:=k_1$. Specifically, we define the grade 1 error function by
\begin{equation} \label{error1-G}
\be_1(\{\bW_j,\bbb_j\}_{j=1}^{k_1};\bx):=\ff(\bx)-\cN_{k_1}(\{\bW_j,\bbb_j\}_{j=1}^{k_1};\bx), \ \ \bx\in\bR^s,
\end{equation}
where $\{\bW_j,\bbb_j\}_{j=1}^{k_1}$ are parameters to be learned. Letting $m_0:=s$,  we solve the optimization problem
\begin{equation}\label{min1-G}
\min\{\|\be_1(\{\bW_j,\bbb_j\}_{j=1}^{k_1};\cdot)\|^2: \bW_j\in\bR^{m_j\times m_{j-1}}, \bbb_j\in\bR^{m_j}, j\in\bN_{k_1}\},
\end{equation}
for 
$\{\bW_{1,j}^*,\bbb_{1,j}^*\}_{j=1}^{k_1}$, which gives the approximation of grade 1
$$
\ff_1(\bx)=\cN^*_{k_1}(\bx):=\cN_{k_1}(\{\bW_j^*,\bbb_j^*\}_{j=1}^{k_1};\bx),\ \ \bx\in\bR^s.
$$
We then define the optimal error of grade 1 by setting
$$
\be^*_1(\bx):=\ff(\bx)-\ff_1(\bx),\ \ \mbox{for}\ \ \bx\in\bR^s,
$$
from which an approximation of grade 2 is to be learned.

Assume that for $i\geq 1$, the neural networks $\cN^*_{k_i}$ of grades $i$,
have been learned with the optimal error $\be_{i}^*$.
We then define the error function of grade $i+1$ by
$$
\be_{i+1}(\{\bW_j, \bbb_j\}_{j=1}^{k_{i+1}};\bx):=\be_{i}^*(\bx)-(\cN_{k_{i+1}}(\{\bW_j, \bbb_j\}_{j=1}^{k_{i+1}};\cdot)\circ\cN^*_{k_{i}}\circ\cdots\circ\cN^*_{k_1})(\bx),\ \ \bx\in\bR^s,
$$
where $\cN_{k_{i+1}}$ is a neural network having the form \eqref{DNN} with $n:=k_{i+1}$ to be learned in grade $i+1$.
Let $m_0:=t$ and $m_{k_{i+1}}:=t$, and we solve the optimization problem
\begin{equation}\label{minl-G}
\min\{\|\be_{i+1}(\{\bW_{j}, \bbb_{j}\}_{j=1}^{k_{i+1}};\cdot)\|^2: \bW_j\in\bR^{m_j\times m_{j-1}}, \bbb_j\in\bR^{m_j},  j\in\bN_{k_{i+1}}\},
\end{equation}
to find the optimal parameters $\{\bW_{i+1,j}^*, \bbb_{i+1,j}^*\}_{j=1}^{k_{i+1}}$. When solving the optimization problem \eqref{minl-G}, the weight matrices and bias vectors of the neural networks $\cN^*_{k_{1}}, \dots, \cN^*_{k_{i}}$ are all fixed. The optimal parameters $\{\bW_{i+1,j}^*, \bbb_{i+1,j}^*\}_{j=1}^{k_{i+1}}$ define the neural network 
$$
\cN^*_{k_{i+1}}:=\cN_{k_{i+1}}(\{\bW_{i+1,j}^*, \bbb_{i+1,j}^*\}_{j=1}^{k_{i+1}};\cdot)
$$ 
and give the approximation of grade $i+1$
$$
\ff_{i+1}(\bx):=(\cN^*_{k_{i+1}}\circ\cN^*_{k_{i}}\circ\cdots\circ\cN^*_{k_1})(\bx), \ \ \bx\in\bR^s.
$$
We then define the optimal error of grade $i+1$ by
$$
\be^*_{i+1}(\bx):=\be^*_{i}(\bx)-\ff_{i+1}(\bx), \ \ \mbox{for}\ \ \bx\in\bR^s.
$$
Note that $\ff_{i+1}$, the newly learned neural network $\cN_{k_{i+1}}^*$ stacked on the top of the neural network $\cN_{k_{i}}^*\circ\cdots\circ\cN_{k_1}^*$
learned in the previous grades, is a best approximation from the set 
\begin{equation}\label{Def-Omega(i+1)}
    \Omega_{i+1}:=\{\cN_{k_{i+1}}(\{\bW_j, \bbb_j\}_{j=1}^{k_{i+1}};\cdot)\circ\cN^*_{k_{i}}\circ\cdots\circ\cN^*_{k_1}: \bW_j\in\bR^{m_j\times m_{j-1}}, \bbb_j\in\bR^{m_j},  j\in\bN_{k_{i+1}}\}
\end{equation}
to $\be^*_i$. The $l$-grade learning model generates the neural network
\begin{equation}\label{Final-NN}
    \overline{\ff}_l:=\sum_{i=1}^l\ff_i,
\end{equation}
which is the superposition of all $l$ networks $\ff_i$, $i\in\bN_l$, unlike the neural network $\cN_n$ learned by \eqref{Basic-Min-Problem-cont}. In each grade, $\ff_i$ is a shallow network learned in grade $i$ composed with the shallow networks learned from the previous grades. In general, the neural network $\overline{\ff}_l$ has a stairs-shape. 

Unlike the traditional single-grade deep learning model,  which solves one optimization problem \eqref{Basic-Min-Problem} of $n$ layers, the $l$-grade model solves $l$ optimization problems
\eqref{min1-G}
and \eqref{minl-G}. Since integers $k_j$ are significantly smaller than $n$, the MGDL model can alleviate the computational challenges, such as being stuck at a local minimizer and vanishing gradient issue. However, the MGDL model still requires to solve non-convex optimization problems. It is highly desirable to develop a model, with sound mathematical foundation, which has the excellent expressiveness of DNNs, while escaping from the troublesome training process of the traditional deep learning model caused by its non-convexity. 
Can one design a special MGDL model which solves {\it only} convex optimization problems? It is the goal of this paper to answer this question. 

The SAL model to be proposed mutates from the MGDL model described above with specializing to the case in which each grade consists of only one layer whose weight matrix and bias vector are found by solving a quadratic/convex optimization problem {\it before} involving the activation function of the layer. In each grade, we learn an affine map for the grade. The SAL model has a multi-grade learning nature with avoiding solving non-convex optimization problems for its grades. We will develop the SAL model in the next several sections.


\section{Successive Affine Learning Model}\label{Linear}

In this section, we describe the SAL model, a mutated MGDL model via successively learning affine maps. The proposed model constructs a deep neural network {\it without} solving a non-convex optimization problem. 

Having a close examination of the structure of a DNN, one can see that it has the following architecture: Each layer of a neural network consists of an affine map (a weight matrix and a bias vector) followed by neurons (compositions with the activation function). Figure \ref{SAL} illustrates the  architecture of a neural network, where the rectangles represent affine maps and the circles represent neurons.  When focusing only on one layer with all parameters of the previous layers {\it fixed}, determining the affine map of the current layer, before applying the activation function, is a quadratic/convex optimization problem, since the activation function of the current layer is not involved in training of the affine map. The activation function applied after the training of the affine map of the current layer will play a role for training of the affine maps of the following layers. Based on this insight of neural networks, we propose the SAL model.

\begin{figure}[H]
\centering
\includegraphics[width=0.8\textwidth, height=0.15\textwidth]{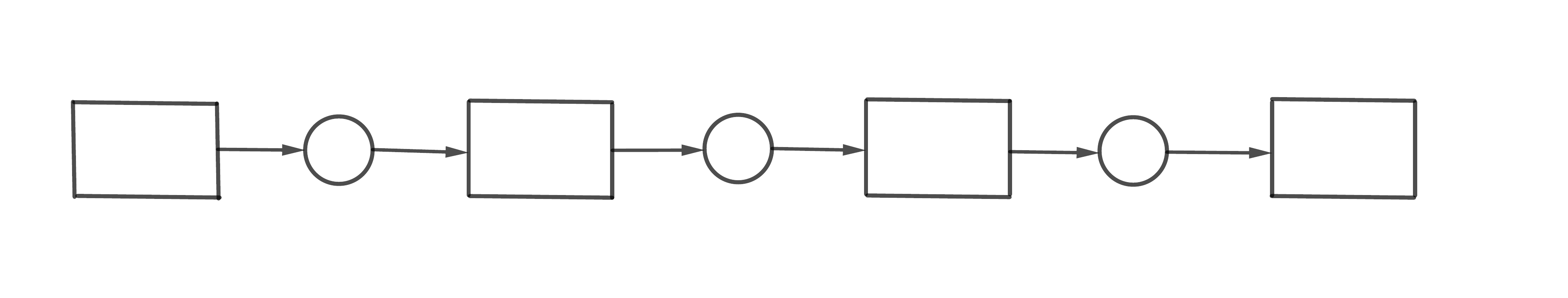}
\caption{Architecture of a neural network}\label{SAL}
\end{figure}

We first describe the SAL model for learning  a function in $L_2(\DD,\bR^t)$.
Given a vector-valued function $\ff\in L_2(\DD,\bR^t)$, we wish to learn a deep neural network that represents the function. 
To free ourselves from the tedious technical details so that we can focus on the main idea and big picture, in this section we confine ourselves to the case that weight matrices are square and postpone the more general and more realistic case until the next section. Also, in the description to follow, we choose $\DD:=\bR^s$.

We now describe the SAL model which builds a neural network that approximates the given function $\ff$. 
As we pointed out earlier, the SAL model mutates from a special case of the MGDL model where each grade consists of only one layer. We first outline learning in grade 1. 
For matrix $\bW\in \bR^{t\times s}$ and vector $\bbb\in \bR^t$, we define the initial error function by
\begin{equation}\label{error-function1}
    \be_1(\bW,\bbb; \bx):=\ff(\bx)-(\bW\bx+\bbb), \ \ \bx\in\bR^s.
\end{equation}
This differs from the error function of grade 1 for a MGDL model whose grades contain only one layer. In the present case, the definition of the error function does not involve the activation function.
We then find 
\begin{equation}\label{min1}
(\bW_1^*,\bbb_1^*):=    {\rm argmin}\{\|\be_1(\bW,\bbb; \cdot)\|^2: \bW\in  \bR^{t\times s}, \bbb\in \bR^t\}.
\end{equation}
Note that \eqref{min1} is a quadratic optimization problem with respect to $\bW$ and $\bbb$, and thus, it can be efficiently solved by various existing algorithms such as the gradient descent method, the Nesterov algorithm, the conjugate gradient method and
the preconditioned conjugate gradient method.
With $\bW_1^*$ and $\bbb_1^*$ found, we obtain the affine map  (linear function)
\begin{equation}\label{ff1}
    \ff_1(\bx):=\bW^*_1\bx+\bbb_1^*, \ \ \bx\in\bR^s,
\end{equation}
that approximates $\ff$ 
with the optimal initial error $\be_1^*$ given by 
\begin{equation}\label{Op-Error-Grade1}
    \be_1^*(\bx):=\be_1(\bW_1^*,\bbb_1^*; \bx), \ \ \bx\in\bR^s.
\end{equation}
It follows from definitions \eqref{error-function1}, \eqref{ff1} and \eqref{Op-Error-Grade1}  that 
\begin{equation}\label{op-error-function1}
\be_1^*(\bx)=\ff(\bx)-\ff_1(\bx), \ \ \bx\in\bR^s.
\end{equation}
We then define the neural network of grade 1 by
\begin{equation}\label{initial-network}
    \cN_1(\bx):=\sigma(\ff_1(\bx)), \ \ \bx\in\bR^s.
\end{equation}
Notice that the vector-valued function $\cN_1: \bR^s\to\bR^t$ contains neurons of the initial layer.
Clearly, from \eqref{op-error-function1} we note that $\be_1^*$ is not the error between $\ff$ and the initial network $\cN_1$, but rather the error of the linear function approximation $\ff_1$ of $\ff$. This is because we find $\bW_1^*$ and $\bbb_1^*$ before applying the activation function. Although the activation function $\sigma$ is not involved in training the weight matrix $\bW^*_1$ and the bias vector $\bbb_1^*$, it will play a role in learning of grades that follow. 
Usually, $\|\be_1^*\|$ is not small, which means that $\be_1^*$ contains useful information of the original function $\ff$. Hence, learning the neural network of grade 2 is required.
The introduction of $\cN_1$ in \eqref{initial-network} is to prepare for moving up to learning of higher grades.

We next describe the SAL model of grade $k$ for $k\geq 2$.
Suppose that the neural networks $\ff_{k-1}$, $\cN_{k-1}$ and the optimal error $\be_{k-1}^*$ of grade $k-1$ 
have been constructed. 
For  matrix $\bW\in \bR^{t\times t}$ and  vector $\bbb\in \bR^t$, we
define the error function of grade $k$ by
\begin{equation}\label{error-(k+1)}
\be_{k}(\bW, \bbb; \bx):=\be_{k-1}^*(\bx)-(\bW\cN_{k-1}(\bx)+\bbb), \ \ \bx\in\bR^s,
\end{equation}
and find 
\begin{equation}\label{step(k+1)}
(\bW_{k}^*,\bbb_{k}^*):={\rm argmin}\{\|\be_{k}(\bW, \bbb; \cdot)\|^2: \bW\in  \bR^{t\times t}, \bbb\in \bR^t\}.
\end{equation}
Again, the error function \eqref{error-(k+1)} of grade $k$ does not involve an activation function for this layer.
Since the weight matrices $\bW_j^*$ and bias vectors $\bbb_j^*$, for $j=1,2,\dots, k-1$, involved in the neural network $\cN_{k-1}$ have been determined, \eqref{step(k+1)} is again a quadratic optimization problem with respect to $\bW$ and $\bbb$, which can be efficiently solved by existing algorithms.
With $\bW_{k}^*$ and $\bbb_{k}^*$ found,  we obtain that
\begin{equation}\label{ffk}
    \ff_{k}(\bx):=\bW^*_{k}\cN_{k-1}(\bx)+\bbb_{k}^*, \ \ \bx\in\bR^s,
\end{equation}
which approximates $\be_{k-1}^*$ and it is a part of the residual  information leftover from learning of all the previous grades. Once again, $\ff_{k}$ is an ``affine map'' (or linear function) of $\cN_{k-1}$. However, $\ff_{k}$ is not a linear function of $\bx$ since $\cN_{k-1}$ involves the activation function $\sigma$.  We then define the optimal error of grade $k$ by
\begin{equation}\label{Optimal-error-k+1}
    \be_{k}^*(\bx):= \be_{k}(\bW_{k}^*, \bbb_{k}^*; \bx), \ \ \bx\in\bR^s
\end{equation}
and the neural network of grade $k$ by
\begin{equation}\label{network-(k+1)}
    \cN_{k}(\bx):=\sigma(\ff_{k}), \ \ \bx\in\bR^s.
\end{equation}
When learning of grade $l$ is completed, the deep neural network learned is given by
\begin{equation}\label{total-ff_k+1}
    \overline{\ff}_{l}:=\sum_{k=1}^{l}\ff_k.
\end{equation}
Unlike the standard neural network, which has only one neural network, the neural network $\overline{\ff}_{l}$ learned by the SAL model is the superposition of the neural networks learned in grade 1 through grade $l$. It also differs from the multi-grade deep learning model introduced in \cite{Xu2023} with every grade consisting of exactly one layer, where each grade solves a non-convex optimization problem since its objective function involves the activation function. The $l$ neural networks $\ff_k$, $k\in \bN_l$, are adaptive orthogonal basis functions for approximation of $\ff$.

\begin{figure}[H]
\centering
\includegraphics[width=0.5\textwidth, height=0.35\textwidth]{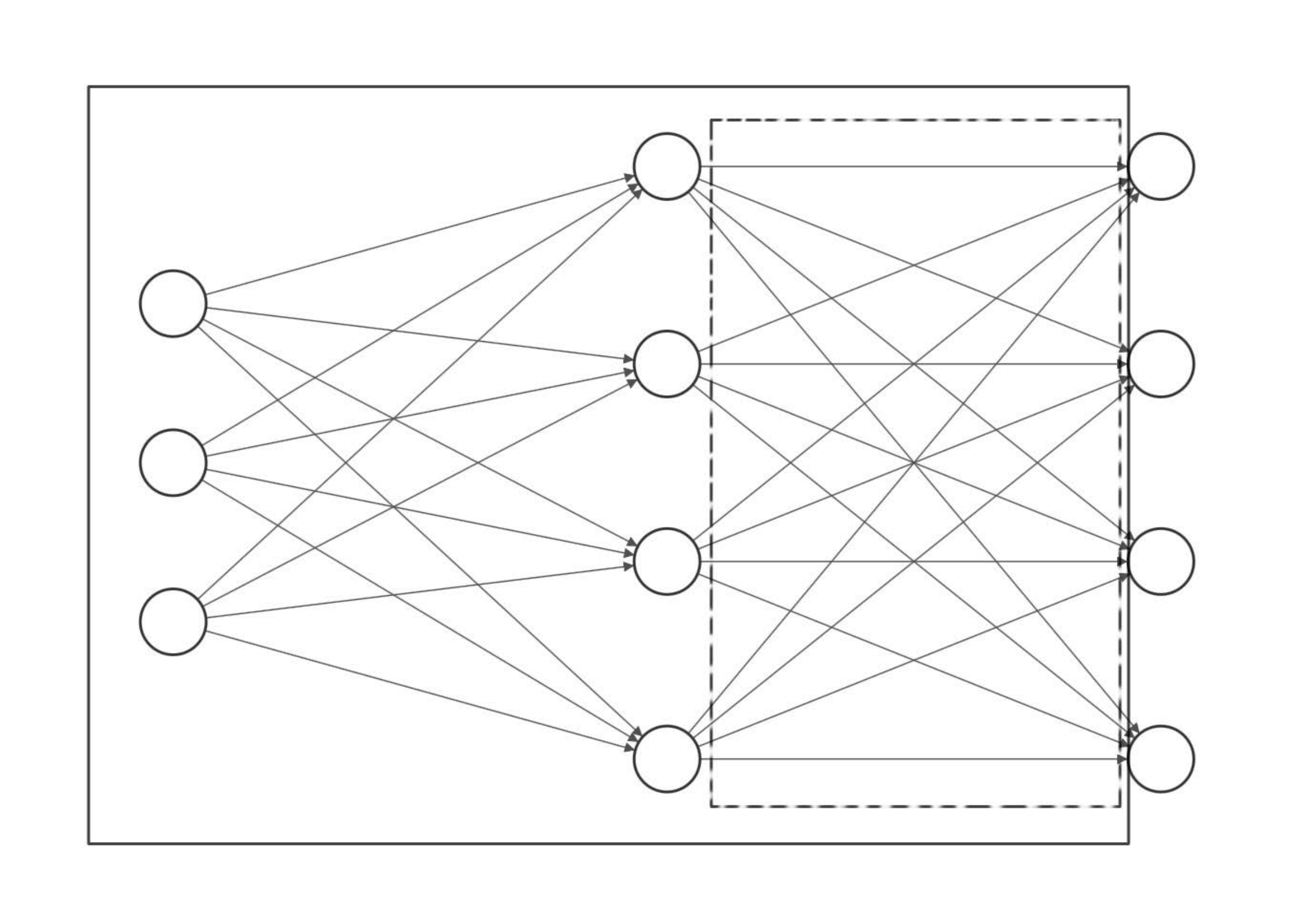}
\caption{Illustration of the Successive Affine Learning model}\label{SAL}
\end{figure}

The essential point of the SAL model is that the weight matrix and bias vector for each grade  are determined by an quadratic optimization problem which does not involve the activation function. 
We illustrate the SAL model in Figure \ref{SAL}. In the figure, the region embraced by the broken lines includes the parameters trained in the current grade and that bounded by the solid lines includes all layers contributed to the training of the current grade. The neurons on the most-right side are not involved in the training of the current grade. However, they will be involved in the training of the next grade, with a role as a ``basis'' determined by the previous grades.

The superiority of the SAL model described above over the standard deep learning is clear.
The  standard neural network of $n$ layers is learned by a single-grade learning model, where $n$ weight matrices and $n$ bios vectors are trained all together, by solving a highly non-convex optimization problem with a vast number of parameters, which would often suffer from the vanishing gradient issue or getting stuck in poor solutions. While the neural network $\overline{\ff}_{n}$ learned by the SAL model is constructed by solving a series of quadratic optimization problems. Specifically,  $\overline{\ff}_{n}$ is the superposition of $n$ neural networks, each of which adds on the top of the previously learned network a new layer with the weight matrix and bios vector trained by solving a quadratic optimization problem. Advantages of this construction include that we pay only the computational cost for solving $n$ quadratic  optimization problems, while we gain the expressiveness power of the nonlinear function compositions of neural networks. Moreover, unlike the standard deep learning model which requires differentiating the activation functions when solving the associated optimization problem, the SAL does not need to differentiate the activation function because the activation function is not involved in the optimization problem for the layer. This makes the SAL model very effective in numerical computation.

The SAL model is also suitable for learning a function from $m$ pairs of discrete data points  $\cD_m:=\{(\bx_j, \by_j)\}_{j=1}^m$, where $\bx_j\in \bR^s$ and $\by_j\in \bR^t$. We now modify the aforementioned model to fit this setting. In this case, the error function for grade 1 is now defined by
\begin{equation*} \label{error1-GD}
\be_1(\bW, \bbb; j):=\by_j-(\bW\bx_j+\bbb), \ \  j\in \bN_m,
\end{equation*}
and its discrete norm has the form
\begin{equation}\label{Error-Discrete1}
    \|\be_1(\bW, \bbb; \cdot)\|_m^2:=\sum_{j=1}^m\|\by_j-(\bW\bx_j+\bbb)\|_{\ell_2}^2.
\end{equation}
We then find
\begin{equation}\label{min1D}
(\bW_1^*,\bbb_1^*):=    {\rm argmin}\{\|\be_1(\bW,\bbb; \cdot)\|_m^2: \bW\in  \bR^{t\times s}, \bbb\in \bR^t\},
\end{equation}
which defines the affine function $\ff_1(\bx)$ and the neural network $\cN_1(\bx)$, $\bx\in\bR^s$, of grade 1 by   \eqref{ff1} and
\eqref{initial-network}, respectively, 
and the associated optimal error by 
$$
\be^*_1(j):=\be_1(\bW^*, \bbb^*; j), \ \  j\in\bN_m.
$$
For each grade $k=2,3,\dots,l$, we define the error function by
$$
\be_k(\bW, \bbb; j):=\be_{k-1}^*(j)-(\bW\cN_{k-1}(\bx_j)+\bbb),
\ \  j\in \bN_m,
$$
and its discrete norm has the form
\begin{equation}\label{Error-Discretei}
\|\be_k(\bW, \bbb; \cdot)\|_m^2:=\sum_{j=1}^m\|\be_{k-1}^*(j)-(\bW\cN_{k-1}(\bx_j)+\bbb)\|_{\ell_2}^2.
\end{equation}
We find
\begin{equation}\label{step(k+1)-d}
(\bW_{k}^*,\bbb_{k}^*):={\rm argmin}\{\|\be_{k}(\bW, \bbb; \cdot)\|_m^2: \bW\in  \bR^{t\times t}, \bbb\in \bR^t\}.
\end{equation}
With $\bW_k^*$ and $\bbb_k^*$ found, we obtain the affine function $\ff_k(\bx)$ and the neural network $\cN_k(\bx)$, $\bx\in\bR^s$, of grade $k$, by \eqref{ffk} and \eqref{network-(k+1)}, respectively.
The associated optimal error of grade $k$ by
$$
\be^*_k(j):=\be_k(\bW^*, \bbb^*; j), \ \ j\in\bN_m.
$$
When $l$ grades of learning are completed, the outcome is the DNN $\overline{\ff}_l$ having the form \eqref{total-ff_k+1} learned from the data points $\cD_m$.

We may define the error function in terms of other norms such as the $L_1$ (or $\ell_1$), $L_p$ (or $\ell_p$) norms, the K-L divergence and the entropy, depending on specific applications. In the cases when the norm used for the error function is not the $L_2$ (or $\ell_2$) norm, instead of solving a quadratic optimization problem, we will solve a convex optimization problem for each grade to learn the affine map for the grade. The form of the convex optimization problem is determined by the type of the norm used in the definition of the error function.


To close this section, we propose a ``$1+l$'' hybrid multi-grade model for learning an approximation of function $\ff$. The proposed ``$1+l$'' hybrid model combines the multi-grade model described in \cite{Xu2023} with the SAL model introduced in this paper. Namely, the proposed model consists of a shallow neural network for grade 1 and the SAL model of $l$ grades for grades 2 to $l+1$. Specifically,  
for grade 1, we learn a shallow neural network 
$\cN_{k_1}$ of $k_1$ layers
by solving non-convex optimization problem \eqref{min1-G} and let $\ff_1=\cN_1:=\cN_{k_1}$.  
For $k>1$,  we successively solve quadratic/convex optimization problem \eqref{step(k+1)} and construct the affine function $\ff_k$ and its associated neural network $\cN_k$ as in equations \eqref{ffk} and \eqref{network-(k+1)}, respectively. We repeat the process $l$ times.  In this way, we solve {\it only} one non-convex optimization problem \eqref{min1-G}, where $k_1$ is a small positive integer, for a shallow neural network of $k_1$ layers,  and solve $l$ quadratic/convex optimization problems  \eqref{step(k+1)} for  updates. In learning of grade 1, we learn lower-level features (for example, in image processing,  edges) from the input data by solving a non-convex optimization problem and in learning of higher grades, we learn higher-level features (details)  by successively solving quadratic/convex optimization problems. The hybrid model may increase the approximation accuracy of the SAL model.



\section{Successive Affine Learning with the Average Pooling}\label{Average-Pooling}

The SAL model described in the last section requires that at each grade, the dimension of the range space of matrix $\bW_k^*$ must be equal to the dimension of the vector-valued function to be learned, since in each grade the error function must have the same dimension as the original function to be approximated. Hence, except for $k=1$, $\bW_k^*$ are all $t\times t$ square matrices.  For a given layer, allowing the row size of the weight matrix to be greater than its column size so that the number of neurons to be used in the layer can be greater than $t$
can enhance the expressiveness of the resulting neural network. Hence, we must address the issue that the row size of the weight matrix is greater than $t$. We require that this addition will not ruin the quadratic or convex nature of the resulting optimization problem for each grade.

Recall that pooling layers are often used in deep learning to down sample feature maps by summarizing the presence of features in patches of the feature map. Two commonly used pooling methods are the average pooling and the max pooling. The average pooling summarizes the average presence of a feature and the max pooling summarizes the most activated presence of a feature.
We propose to employ the average pooling operator to pull back the matrix size to $t$ so that we can compute the error function. An advantage of using the average pooling operator lies on the fact that such a choice will not ruin the quadratic or convex nature of the resulting optimization problem for training the weight matrix and the bias vector for the layer. We next describe the SAL model assisted by the average pooling operator.

We first recall the average pooling operator.
For an integer $\mu\ge 0$, the average pooling $\cP_\mu$ is the linear operator from $\bR^{d+\mu}$ to $\bR^d$, for any $d\in\bN$, defined by
\begin{equation}\label{averagepooling}
	(\cP_\mu\bx)_i:=\frac{1}{\mu+1}\sum_{j=0}^\mu x_{i+j},\ \ i\in \bN_d,\ \ \bx\in\bR^{d+\mu}.
\end{equation}
It can be seen that the matrix representation of the average pooling operator $\cP_\mu$ is of full row rank. Hence, $\cP_\mu$ maps from $\bR^{d+\mu}$ onto $\bR^d$.
In the signal processing community, the average pooling is also called the down sampling operator. In particular, when $\mu=0$, $\cP_0$ reduces to the identity operator $\cI:\bR^d\to\bR^d$.

We now describe the SAL with the average pooling.
Suppose that a sequence of matrix widths $m_n\in\bN$, $n\in\bN$, is chosen with $m_n\geq t$ and $m_0:=s$, for a neural network to be learned. The parameter $\mu$ in the pooling operator $\cP_\mu$ is determined by the matrix widths $m_n$ at the $n$-th grade.
For matrix $\bW_1\in \bR^{m_1\times m_0}$ and vector $\bbb_1\in \bR^{m_1}$, we define the  error function $\be_{1}^P(\bW_1,\bbb_1; \cdot):\bR^s\to\bR^t$ of grade 1 by
\begin{equation}\label{error-function1-pooling}
    \be_{1}^P(\bW_1,\bbb_1; \bx):=\ff(\bx)-\cP_{\mu_1}(\bW_1\bx+\bbb_1), \ \ \bx\in\bR^s,
\end{equation}
where $\mu_1:=m_1-t$. Here, in general, $m_1>t$, and when $m_1=t$, we have that $\mu=0$, that is, $\cP_\mu$ reduces to the identity operator. Note that the pooling operator $\cP_{\mu_1}$ involved in \eqref{error-function1-pooling}
reduces the size of the affine map from $m_1$ to $t$ so that the right-hand-side of equation \eqref{error-function1-pooling} is well-defined. We then find 
\begin{equation}\label{min1-pooling}
(\bW_1^*,\bbb_1^*):=    {\rm argmin}\{\|\be^P_{1}(\bW_1,\bbb_1; \cdot)\|^2: \bW_1\in  \bR^{m_1\times s}, \bbb_1\in \bR^{m_1}\}.
\end{equation}
Since the pooling operator $\cP_{\mu_1}$ is a specified linear operator,  \eqref{min1-pooling} is a quadratic minimization problem with respect to $\bW_1$ and $\bbb_1$. As in section 3, the quadratic optimization problem 
\eqref{min1-pooling} can be solved efficiently by existing algorithms. In other words,  adding a pooling layer to the affine map to be learned does not increase significantly the computational complexity in solving optimization problem \eqref{min1-pooling}, comparing to solving optimization problem \eqref{min1}.
With $\bW_1^*\in\bR^{m_1\times t}$ and $\bbb_1^*\in\bR^{m_1}$ obtained, we get the affine function 
\begin{equation}\label{ff1-A}
    \ff_1^P(\bx):=\cP_\mu(\bW^*_1\bx+\bbb_1^*), \ \ \bx\in\bR^s,
\end{equation}
which approximates $\ff$. The associated optimal error of grade 1 is then defined by
\begin{equation}\label{Opt-Error-P}
    \be_{1}^{P*}(\bx):=\be_1^P(\bW_1^*,\bbb_1^*; \bx), \ \ \bx\in\bR^s
\end{equation}
and the associated neural network of grade 1 is defined by
\begin{equation}\label{initial-network-average-Modify}
    \cN_1^P(\bx):=\sigma(\bW_1^*\bx+\bbb^*_1), \ \ \bx\in\bR^s.
\end{equation}
Clearly, unlike the neural network $\cN_1$ constructed in section 3, the neural network $\cN_1^P: \bR^s\to\bR^{m_1}$ is no longer equal to $\sigma(\ff_1)$. Note that the dimension of the vector-valued function $\cN_1^P$ is larger than that of $\ff_1^P$, expected to have better expressiveness. 


Suppose that for $k\geq 1$, the neural networks $\ff_k^P:\bR^s\to\bR^t$ and $\cN^P_k: \bR^s\to\bR^{m_k}$ of grade $k$ 
have been learned, with the optimal error $\be_{k}^{P*}: \bR^s\to\bR^t$ determined by the weight matrix $\bW_k^*$ and the bias vector $\bbb_k^*$, and we proceed to learn the weight matrix and bias vector of grade $k+1$. We choose $\mu_{k+1}:=m_{k+1}-t$. Then, the average pooling operator  $\cP_{\mu_{k+1}}$ for grade $k+1$ will map $\bR^{m_{k+1}}$ to $\bR^t$. For matrix $\bW_{k+1}\in \bR^{m_{k+1}\times m_k}$ and vector $\bbb_{k+1}\in \bR^{m_{k+1}}$, we
define the error function $\be_{k+1}^P(\bW_{k+1}, \bbb_{k+1}; \cdot): \bR^s\to \bR^t$ with the average pooling
 of grade $k+1$ by
\begin{equation}\label{error-(k+1)-A}
\be_{k+1}^P(\bW_{k+1}, \bbb_{k+1}; \bx):=\be_{k}^{P*}(\bx)-\cP_{\mu_{k+1}}(\bW_{k+1}\cN^P_k(\bx)+\bbb_{k+1}), \ \ \bx\in\bR^s.
\end{equation}
We then find 
\begin{equation}\label{step(k+1)-A}
(\bW_{k+1}^*,\bbb_{k+1}^*):={\rm argmin}\{\|\be_{k+1}^P(\bW_{k+1}, \bbb_{k+1}; \cdot)\|^2: \bW_{k+1}\in  \bR^{m_{k+1}\times m_k}, \bbb_{k+1}\in \bR^{m_{k+1}}\}.
\end{equation}
Once again, the average pooling operator $\cP_{\mu_{k+1}}$ will not ruin the quadratic nature of the optimization problem.
With the weight matrix $\bW_{k+1}^*\in\bR^{m_{k+1}\times m_k}$ and the bias vector $\bbb_{k+1}^*\in\bR^{m_{k+1}}$ found,
we obtain that
\begin{equation}\label{ff_k+1-A}
\ff_{k+1}^P(\bx):=\cP_{\mu_{k+1}}(\bW_{k+1}^*\cN^P_k(\bx)+\bbb^*_{k+1}), \ \ \bx\in\bR^s,
\end{equation}
which approximates the optimal error $\be_k^{P*}$ of grade $k$. Note that neural network $\ff_{k+1}^P$ is a vector-valued function mapping $\bR^s$ to $\bR^t$. We then define another neural network with the average pooling of grade $k+1$ by
\begin{equation}\label{network(k+1)-A-Modify}
\cN_{k+1}^P(\bx):=\sigma(\bW_{k+1}^*\cN_k^P(\bx)+\bbb^*_{k+1}), \ \ \bx\in\bR^s,
\end{equation}
which maps $\bR^s$ to $\bR^{m_{k+1}}$.
The optimal error function of grade $k+1$ is clearly given by
\begin{equation}\label{optimal-error-k+1-A}
    \be_{k+1}^{P*}(\bx):= \be_{k+1}^P(\bW_{k+1}^*, \bbb_{k+1}^*; \bx), \ \ \bx\in\bR^s.
\end{equation}
When the SAL with the average pooling of grade $k+1$ is completed, we have learned the neural network with the average pooling 
\begin{equation}\label{total-ff_k+1-A}
    \overline{\ff}_{k+1}^P:=\sum_{i=1}^{k+1}\ff_i^P,
\end{equation}
which approximates $\ff$ with the error
$$
\be_{k+1}^{P*}(\bx):=\ff(\bx)-\overline{\ff}_{k+1}^P(\bx), \ \ \bx\in \bR^s.
$$
Therefore, the SAL model leads to the orthogonal expansion of $\ff$:
\begin{equation}\label{Orthogonal-expansion}
 \ff=  \sum_{i=1}^{k+1}\ff_i^P+\be_{k+1}^{P*},
\end{equation}
where $\ff_i$, $i=1,2,\dots,K+1$ and $\be_{k+1}^{P*}$ are mutually orthogonal.

Using the pooing operator in the SAL model is crucial to increase the accuracy of the resulting neural network. 
The SAL with the average pooing allows us to expand the sizes of the weight matrices and the bias vectors of the resulting neural network, and thus to enhance the expressiveness of the learned function.
Note that when $\mu=0$, the SAL with the trivial pooling operator $\cP_0$ reduces to the SAL without pooling.

A comment on the pooling operator is in order.
One may replace the average pooing used in SAL by other types of pooling, for example, the max pooling. When the max pooling is used, the resulting models to learn the weight matrices and bias vectors are no longer quadratic optimization problems. One may also substitute the average pooling by a matrix of an appropriate sizes. For instance, in learning of grade $k+1$, one may replace $\cP_{\mu_{k+1}}$ by a matrix $\bP\in\bR^{t\times m_{k+1}}$, with partial or all entries fixed. When the matrix contains free parameters, again the resulting model is not a quadratic optimization. Preliminary numerical results presented in this paper show that the average pooling works well. It is our future research project to investigate possible uses of other types of pooling operators.


\section{Analysis of the Successive Linear Learning Model}

In this section, we provide rigorous theoretical analysis for the proposed SAL model. We will show that the function representation generated by the SAL model enjoys the Pythagorean identity and the Parseval identity. These results make the ``harmonic analysis'' of DNNs possible. Moreover, we prove that the SAL
model without pooling either terminates after a finite number of grades or the optimal error functions of the grades are strictly decreasing in their norms.

We first represent a given function $\ff\in L_2(\DD,\bR^t)$ in terms of the superposition of the neural networks learned by the SAL model with the average pooling operator and the optimal error function.

\begin{theorem}
If $\ff\in L_2(\DD,\bR^t)$, then $\ff$ has the representation, for each $k\in\bN$,
\begin{equation}\label{representation_of_ff}
\ff(\bx)=\sum_{j=1}^k \cP_{\mu_j}\left[\bW_j^*\left(\bigodot_{i=1}^{j-1}\sigma(\bW_i^* \cdot+\bbb_i^*)\right)(\bx)+\bbb_j^*\right]+\be^{P*}_k(\bx), \ \ \bx\in\DD,
\end{equation}
where $\cP_{\mu_j}$ is the average pooing operator.
\end{theorem}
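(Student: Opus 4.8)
The plan is to prove the representation by induction on $k$, unwinding the recursive definitions of the SAL-with-pooling construction given in Section~\ref{Average-Pooling}. The key observation is that the formula \eqref{representation_of_ff} is nothing but the telescoping identity $\ff=\sum_{j=1}^k\ff_j^P+\be_k^{P*}$ combined with the explicit closed form of each term $\ff_j^P$ in terms of the learned parameters.

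First I would establish the base case $k=1$. By definition \eqref{Opt-Error-P} and \eqref{error-function1-pooling}, $\be_1^{P*}(\bx)=\ff(\bx)-\cP_{\mu_1}(\bW_1^*\bx+\bbb_1^*)$, so $\ff(\bx)=\cP_{\mu_1}(\bW_1^*\bx+\bbb_1^*)+\be_1^{P*}(\bx)$. This matches the right-hand side of \eqref{representation_of_ff} with $k=1$, because the empty composition $\bigodot_{i=1}^{0}\sigma(\bW_i^*\cdot+\bbb_i^*)$ is by convention the identity map on $\bR^s$, so the $j=1$ summand reads $\cP_{\mu_1}[\bW_1^*\bx+\bbb_1^*]$.

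Next, for the inductive step, I assume \eqref{representation_of_ff} holds for some $k\ge 1$ and prove it for $k+1$. From \eqref{optimal-error-k+1-A} and \eqref{error-(k+1)-A} we have $\be_{k+1}^{P*}(\bx)=\be_k^{P*}(\bx)-\cP_{\mu_{k+1}}(\bW_{k+1}^*\cN_k^P(\bx)+\bbb_{k+1}^*)$; equivalently $\be_k^{P*}(\bx)=\ff_{k+1}^P(\bx)+\be_{k+1}^{P*}(\bx)$ by \eqref{ff_k+1-A}. Substituting this into the inductive hypothesis replaces the $\be_k^{P*}$ term by $\ff_{k+1}^P+\be_{k+1}^{P*}$, so it remains only to verify that $\ff_{k+1}^P(\bx)$ equals the $j=k+1$ summand of \eqref{representation_of_ff}, namely $\cP_{\mu_{k+1}}[\bW_{k+1}^*(\bigodot_{i=1}^{k}\sigma(\bW_i^*\cdot+\bbb_i^*))(\bx)+\bbb_{k+1}^*]$. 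Comparing with \eqref{ff_k+1-A}, this reduces to the claim $\cN_k^P(\bx)=(\bigodot_{i=1}^{k}\sigma(\bW_i^*\cdot+\bbb_i^*))(\bx)$, which is itself proved by a short auxiliary induction on $k$ using \eqref{initial-network-average-Modify} for the base case and the recursion \eqref{network(k+1)-A-Modify} for the step, exactly paralleling the recursive definition \eqref{Recursion} of a DNN.

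The argument is essentially bookkeeping — unrolling two coupled recursions — so there is no genuine analytic obstacle. The one point requiring care is the handling of the empty composition at $j=1$ and the consistent use of the convention that $\bigodot_{i=1}^{0}$ is the identity (matching how \eqref{Step1}--\eqref{Recursion} treat the first layer), so that the general-$j$ summand specializes correctly at the endpoints. A second mild subtlety is that $\be_k^{P*}$ is simultaneously defined via the error function \eqref{error-(k+1)-A}--\eqref{optimal-error-k+1-A} and asserted to equal $\ff-\overline{\ff}_k^P$; one should check these two descriptions agree (this is the telescoping sum $\sum_{j=1}^k\ff_j^P=\overline{\ff}_k^P$ together with the recursion for $\be_k^{P*}$), after which \eqref{representation_of_ff} follows immediately. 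That is the entire proof.
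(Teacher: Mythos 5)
Your proof is correct and follows essentially the same route as the paper's: both reduce the statement to the telescoping identity $\ff=\sum_{j=1}^k\ff_j^P+\be_k^{P*}$ together with the closed form $\cN_j^P=\bigodot_{i=1}^{j}\sigma(\bW_i^*\cdot+\bbb_i^*)$, the latter established by induction using \eqref{initial-network-average-Modify} and \eqref{network(k+1)-A-Modify}. Your version is slightly more careful about the empty-composition convention at $j=1$ and about reconciling the two descriptions of $\be_k^{P*}$, but the substance is identical.
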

\begin{proof}
By equation \eqref{total-ff_k+1-A} with $\be_{k}^{P*}:=\ff-\overline{\ff}_{k}^P$, we have that 
\begin{equation}\label{ff-expansion}
\ff=\sum_{j=1}^k\ff_j^P+\be_{k}^{P*}.
\end{equation}
It suffices to show for all $j\in\bN_k$ that
\begin{equation}\label{ff^P_j}
\ff_j^P(\bx)=\cP_{\mu_j}\left[\bW_j^*\left(\bigodot_{i=1}^{j-1} \sigma(\bW_i^* \cdot+\bbb_i^*)\right)(\bx)+\bbb_j^*\right].
\end{equation}
We will establish formula \eqref{ff^P_j} by showing
\begin{equation}\label{cN^P_j}
\cN_j^P(\bx)=\bigodot_{i=1}^{j} \sigma(\bW_i^* \cdot+\bbb_i^*)(\bx),
\end{equation}
since formula \eqref{ff^P_j} follows directly from \eqref{cN^P_j} and \eqref{ff_k+1-A}.
We now prove formula \eqref{cN^P_j} by induction on $j$. For $j=1$, by definition \eqref{initial-network-average-Modify}, we clearly have formula \eqref{cN^P_j} with $j=1$. We assume that formula \eqref{cN^P_j} holds true for $j$ and proceed to the case $j+1$. 
By substituting formula \eqref{cN^P_j} into the right-hand-side of equation \eqref{network(k+1)-A-Modify} with $k:=j$, we establish formula \eqref{cN^P_j} with $j$ being replaced by $j+1$.
%
\end{proof}

We next study the sequence of optimal error functions $\be_k^{P*}$, $k\in\bN$.
Note that in learning of grade $k$, we solve the quadratic 
optimization problem
\begin{equation}\label{step(k+1)-G}
(\bW_{k}^*,\bbb_{k}^*):={\rm argmin}\{\|\be_{k-1}^{P*}(\cdot)-\cP_{\mu_k}(\bW\cN_{k-1}^P +\bbb)\|^2: \bW\in  \bR^{m_k\times m_{k-1}}, \bbb\in \bR^{m_k}\}.
\end{equation}
We will rewrite problem \eqref{step(k+1)-G} as an orthogonal projection to a subspace. To this end,
for each $k\in\bN$, 
we let
\begin{equation}\label{SpaceA-k}
    \cA_k^P:={\rm span}\{\cP_{\mu_k}[\bW\cN_{k-1}^P(\cdot) +\bbb]:  \bW\in  \bR^{m_k\times m_{k-1}}, \bbb\in \bR^{m_k}\},
\end{equation}
with $\cN_0^P(\bx):=\bx$. 

\begin{lemma}\label{cN_k-inL2}
Let $\ff\in L_2(\DD,\bR^t)$ and $\cN_k^P$ be generated from $\ff$ by the SAL model with the average pooing operator. If $\sigma: \bR\to\bR$ 
is bounded on any bounded set $\DD\subset\bR$,
then 

(i) $\cN_k^P\in L_2(\DD, \bR^{m_k})$;

(ii) $\cA_k^P$ is a linear subspace of $L_2(\DD,\bR^t)$.
\end{lemma}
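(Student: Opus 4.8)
The plan is to prove the two assertions by induction on $k$, exploiting the recursive construction of $\cN_k^P$ together with the boundedness hypothesis on $\sigma$. The key observation is that $\DD$ is a bounded domain, so that $L_2(\DD)$ contains all bounded measurable functions on $\DD$; hence it suffices to show that each component of $\cN_k^P$ is bounded (and measurable) on $\DD$. For part (i), the base case $k=1$ follows directly from the definition $\cN_1^P(\bx)=\sigma(\bW_1^*\bx+\bbb_1^*)$: the affine map $\bx\mapsto\bW_1^*\bx+\bbb_1^*$ sends the bounded set $\DD$ into a bounded subset of $\bR^{m_1}$, and applying $\sigma$ componentwise to a vector-valued function with bounded range produces a bounded function by the hypothesis that $\sigma$ is bounded on bounded sets; boundedness on a set of finite measure gives membership in $L_2(\DD,\bR^{m_1})$. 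For the inductive step, assume $\cN_k^P\in L_2(\DD,\bR^{m_k})$ and is bounded on $\DD$; then $\bx\mapsto\bW_{k+1}^*\cN_k^P(\bx)+\bbb_{k+1}^*$ has bounded range in $\bR^{m_{k+1}}$, so by definition \eqref{network(k+1)-A-Modify} and the boundedness of $\sigma$, $\cN_{k+1}^P=\sigma(\bW_{k+1}^*\cN_k^P(\cdot)+\bbb_{k+1}^*)$ is bounded on $\DD$, hence lies in $L_2(\DD,\bR^{m_{k+1}})$.

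For part (ii), I would argue that $\cA_k^P$, as defined in \eqref{SpaceA-k}, is the image under the linear map $(\bW,\bbb)\mapsto\cP_{\mu_k}[\bW\cN_{k-1}^P(\cdot)+\bbb]$ of the finite-dimensional vector space $\bR^{m_k\times m_{k-1}}\times\bR^{m_k}$. First I would check that this map actually takes values in $L_2(\DD,\bR^t)$: by part (i) (applied with index $k-1$, using the convention $\cN_0^P(\bx)=\bx$, which is clearly in $L_2(\DD,\bR^s)$ since $\DD$ is bounded), each function $\bW\cN_{k-1}^P(\cdot)+\bbb$ lies in $L_2(\DD,\bR^{m_k})$, and $\cP_{\mu_k}$ is a fixed linear operator from $\bR^{m_k}$ to $\bR^t$ whose componentwise action — an average of $\mu_k+1$ coordinates — maps $L_2(\DD,\bR^{m_k})$ boundedly into $L_2(\DD,\bR^t)$. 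Since $L_2(\DD,\bR^t)$ is a vector space and $\cA_k^P$ is by definition the span of a subset of it, $\cA_k^P$ is automatically a linear subspace of $L_2(\DD,\bR^t)$; indeed, being the image of a linear map, the span is already the set itself.

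I do not anticipate a serious obstacle here; the statement is essentially a bookkeeping consequence of the recursive definitions combined with two soft facts — that a bounded function on a bounded domain is square-integrable, and that a fixed linear operator preserves these properties. The one point requiring a little care is making sure the base case of the induction for part (i) is set up with the right index so that part (ii) can invoke it for $\cN_{k-1}^P$, and handling the degenerate case $k=1$ in part (ii) via the convention $\cN_0^P(\bx)=\bx$; measurability of all the functions involved is immediate since affine maps, $\sigma$ (which we may take to be, say, continuous or at least Borel, as is implicit), and the pooling operator are all Borel.
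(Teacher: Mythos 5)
Your proposal is correct and follows essentially the same route as the paper: the paper's proof likewise rests on the observation that $\DD$ (and hence its image under the previously learned layers) is bounded, so the hypothesis on $\sigma$ bounds $\|\cN_k^P\|_{\ell_2}$ and square-integrability follows from finiteness of the measure, with part (ii) then immediate. You merely make explicit the induction that the paper compresses into the phrase "a bounded set mapped from $\DD$," which is a reasonable bit of added care but not a different argument.
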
 
\begin{proof}
(i) According to the hypothesis, there exists a positive constant $L$ such that 
$$
\|\sigma(\by)\|_{\ell_2}\leq L\ \ \mbox{for all}\ \ \by\in\tilde\DD,
$$
where $\tilde\DD$ is a bounded set mapped from $\DD$.
By the definition of the norm of the space $L_2(\DD,\bR^{m_k})$ and that of $\cN_k^P$, we have that
\begin{align*}
   \int_\DD \|\cN_k^P(\bx)\|_{\ell_2}^2d\bx
   &=\int_\DD\|\sigma(\bW^*_k\cN_{k-1}^P(\bx)+\bbb_k^*)\|_{\ell_2}^2d\bx \leq L {\rm meas}(\tilde\DD)<+\infty.
\end{align*}
That is, $\cN_k^P\in L_2(\DD,\bR^{m_k})$.

Item (ii) follows directly from Item (i).
\end{proof}

We find it helpful to re-express $\ff_k^P$ determined by minimization problem \eqref{step(k+1)-G} as an orthogonal projection.
%
 With the notation $\cA_k^P$, the minimization problem \eqref{step(k+1)-G} may be rewritten as
\begin{equation}\label{step(k+1)-G2}
\ff_k^P={\rm argmin}\{\|\be_{k-1}^{P*}-\bg\|^2: \bg\in\cA_k^P\}.
\end{equation}
That is, 
$\ff_{k}^P$ is the orthogonal projection of $\be^{P*}_{k-1}$ onto the subspace $\cA_k^P$.
We are now ready to present our first main result of this section.

\begin{theorem}\label{Theorem-error-Pooling} Let $\ff\in L_2(\DD,\bR^t)$, $\ff_k^P$,  $\cN_k^P$, $k\in\bN$, be generated by the SAL model with the average pooling operator, and $\be_k^{P*}$ $k\in\bN$, be the corresponding optimal error functions. The following statements hold true:

(i) For all $k\in\bN$,
\begin{equation}\label{Pythagorean}
    \|\be_{k}^{P*}\|^2=\|\be_{k+1}^{P*}\|^2+\|\ff_{k+1}^P\|^2,
\end{equation}
and $\ff_{k+1}^P=\b0$ for some $k\in\bN$ if and only if $\|\be_{k+1}^{P*}\|=\|\be_k^{P*}\|$.

(ii) For all $k\in\bN$,
\begin{equation}\label{error-nonincreasing}
    \|\be^{P*}_{k+1}\|\leq \|\be^{P*}_{k}\|,
\end{equation}
and the sequence $\|\be_k^{P*}\|$, $k\in\bN$, has a nonnegative limit.


(iii) For each $k\in\bN$, either $\ff_{k+1}^P=\b0$  or
\begin{equation}\label{error-decreasing}
    \|\be^{P*}_{k+1}\|< \|\be^{P*}_k\|.
\end{equation}

(iv) If $\cN_{k}^P= \b0$ for some $k\in\bN$, then $\|\be_{k+1}^{P*}\|=\|\be_{k}^{P*}\|$.

\end{theorem}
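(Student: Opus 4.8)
The engine behind all four parts is the variational identity hidden in the reformulation \eqref{step(k+1)-G2}, so the plan is to extract it once and then read off the consequences. \textbf{(Step 1) Orthogonality of the optimal error.} I would first show that $\be_k^{P*}\perp\cA_k^P$ for every $k$. By Lemma~\ref{cN_k-inL2}(ii), $\cA_k^P$ is a linear subspace of $L_2(\DD,\bR^t)$; by \eqref{step(k+1)-G2}, $\ff_k^P\in\cA_k^P$ minimizes $\|\be_{k-1}^{P*}-\bg\|^2$ over $\bg\in\cA_k^P$; hence for fixed $\bg\in\cA_k^P$ the scalar function $t\mapsto\|\be_{k-1}^{P*}-\ff_k^P-t\bg\|^2$ has a minimum at $t=0$, and its derivative there vanishes, giving $\langle\be_{k-1}^{P*}-\ff_k^P,\bg\rangle=0$. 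Since $\be_k^{P*}=\be_{k-1}^{P*}-\ff_k^P$ by \eqref{error-(k+1)-A}, \eqref{ff_k+1-A}, \eqref{optimal-error-k+1-A} (with the obvious base case $\be_1^{P*}=\ff-\ff_1^P$ and $\cN_0^P(\bx)=\bx$), this shows $\be_k^{P*}\perp\cA_k^P$; equivalently $\be_{k-1}^{P*}=\ff_k^P+\be_k^{P*}$ is an orthogonal decomposition, i.e. $\ff_k^P$ is the orthogonal projection of $\be_{k-1}^{P*}$ onto $\cA_k^P$.

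\textbf{(Parts (i)--(iii)).} Applying Step 1 at grade $k+1$, $\ff_{k+1}^P\in\cA_{k+1}^P$ and $\be_{k+1}^{P*}\perp\cA_{k+1}^P$, so $\langle\be_{k+1}^{P*},\ff_{k+1}^P\rangle=0$ and the Pythagorean theorem gives $\|\be_k^{P*}\|^2=\|\be_{k+1}^{P*}\|^2+\|\ff_{k+1}^P\|^2$, which is \eqref{Pythagorean}. The equivalence in (i) follows since $\ff_{k+1}^P=\b0\iff\|\ff_{k+1}^P\|=0\iff\|\be_{k+1}^{P*}\|^2=\|\be_k^{P*}\|^2\iff\|\be_{k+1}^{P*}\|=\|\be_k^{P*}\|$. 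Part (ii) is immediate from \eqref{Pythagorean}: \eqref{error-nonincreasing} holds, and the nonincreasing sequence $\|\be_k^{P*}\|$, bounded below by $0$, converges to a nonnegative limit. For (iii), if $\ff_{k+1}^P\ne\b0$, the equivalence in (i) forbids $\|\be_{k+1}^{P*}\|=\|\be_k^{P*}\|$, and together with (ii) this forces \eqref{error-decreasing}.

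\textbf{(Part (iv)).} This is the only step requiring more than Step 1. Let $\cC$ denote the subspace of $L_2(\DD,\bR^t)$ of constant functions. If $\cN_k^P=\b0$ on $\DD$, then for all $\bx$ and all admissible $\bW,\bbb$ we have $\cP_{\mu_{k+1}}(\bW\cN_k^P(\bx)+\bbb)=\cP_{\mu_{k+1}}\bbb$, so the generators of $\cA_{k+1}^P$ in \eqref{SpaceA-k} are precisely the constant functions $\bx\mapsto\cP_{\mu_{k+1}}\bbb$; since $\cP_{\mu_{k+1}}$ maps onto $\bR^t$, this gives $\cA_{k+1}^P=\cC$. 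Taking $\bW=\b0$ in the generators of $\cA_k^P$ shows $\cC\subseteq\cA_k^P$, hence $\cA_{k+1}^P=\cC\subseteq\cA_k^P$; combined with $\be_k^{P*}\perp\cA_k^P$ from Step 1, we obtain $\be_k^{P*}\perp\cA_{k+1}^P$. Therefore the minimizer $\ff_{k+1}^P$ of $\|\be_k^{P*}-\bg\|^2$ over $\bg\in\cA_{k+1}^P$ is $\b0$, so $\be_{k+1}^{P*}=\be_k^{P*}-\ff_{k+1}^P=\be_k^{P*}$ and $\|\be_{k+1}^{P*}\|=\|\be_k^{P*}\|$ (one may equivalently quote \eqref{Pythagorean}).

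\textbf{Main obstacle.} The delicate point is Step 1: because $\cA_k^P$ is only asserted to be a linear subspace, not a closed one, the orthogonality of the residual must be obtained from the first-variation argument rather than from the Hilbert-space projection theorem; this works as long as the minimum in \eqref{step(k+1)-G2} is attained, which is part of the SAL construction. One should also check that all objects lie in $L_2(\DD,\bR^t)$ — this follows from $\ff\in L_2(\DD,\bR^t)$, Lemma~\ref{cN_k-inL2}(i), and boundedness of the pooling operators — and note that the use of constant functions in (iv) presumes $\DD$ has finite measure, which is implicit in the standing assumption that the affine maps appearing in $\cA_k^P$ already belong to $L_2(\DD)$.
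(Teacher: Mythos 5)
Your proposal is correct. For parts (i)--(iii) you follow essentially the same route as the paper: identify $\ff_{k+1}^P$ via \eqref{step(k+1)-G2} as the orthogonal projection of $\be_k^{P*}$ onto the linear subspace $\cA_{k+1}^P$, deduce $\left<\be_{k+1}^{P*},\ff_{k+1}^P\right>=0$, and read off the Pythagorean identity and its consequences. Your explicit first-variation derivation of the orthogonality is simply the proof of the characterization the paper quotes from the best-approximation literature; and since $\cA_{k+1}^P$ is the image of the finite-dimensional parameter space of pairs $(\bW,\bbb)$ under a linear map, it is finite-dimensional and hence closed, so attainment of the minimum is automatic and your cautionary remark, while legitimate, is not actually needed. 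Part (iv) is where you genuinely diverge. The paper argues by a direct norm comparison: when $\cN_k^P=\b0$ the grade-$(k+1)$ update is the constant $\cP_{\mu_{k+1}}\bbb_{k+1}^*$, and the full row rank of the pooling operator lets this constant be absorbed into the bias of grade $k$, so optimality of $(\bW_k^*,\bbb_k^*)$ forces $\|\be_{k+1}^{P*}\|\geq\|\be_k^{P*}\|$, which combined with (ii) gives equality. You instead observe that $\cA_{k+1}^P$ collapses to the space $\cC$ of constant functions, that $\cC\subseteq\cA_k^P$ (take $\bW=\b0$ and use surjectivity of $\cP_{\mu_k}$), and that $\be_k^{P*}\perp\cA_k^P$ by Step 1, so the projection onto $\cA_{k+1}^P$ vanishes and $\be_{k+1}^{P*}=\be_k^{P*}$ exactly. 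Your route is shorter, stays entirely inside the projection framework, and delivers the slightly stronger conclusion $\ff_{k+1}^P=\b0$ directly, which is what Theorem \ref{Theorem-error-Pooling2} ultimately needs and which the paper only recovers by combining (iv) with the second part of (i); the paper's absorption argument, on the other hand, does not require first establishing orthogonality of the residual to all of $\cA_k^P$. Your final caveat that constants must lie in $L_2(\DD,\bR^t)$, i.e.\ that $\DD$ has finite measure, is a fair observation and is implicit in the paper's standing assumptions.
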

\begin{proof}
(i) By definitions \eqref{error-(k+1)-A},  \eqref{ff_k+1-A} and \eqref{optimal-error-k+1-A}, we observe that
$$
\be_k^{P*}(\bx)=\be_{k+1}^{P*}(\bx)+\ff_{k+1}^P(\bx), \ \ \bx\in\DD.
$$
By Item (ii) of Lemma \ref{cN_k-inL2}, $\cA_k^P$ is a linear subspace of $L_2(\DD,\bR^t)$.
Moreover, by the discussion prior to the statement of this theorem,  $\ff_{k+1}^P$ is the orthogonal projection of $\be^{P*}_k$ onto the subspace $\cA_{k+1}^P$. Thus, we have that
$$
\left<\be_{k+1}^{P*},\ff_{k+1}^P\right>=\left<\be_{k}^{P*}-\ff_{k+1}^P, \ff_{k+1}^P\right>=0.
$$
The last equality of the above equation holds because $\ff_{k+1}^P\in \cA_k^P$ and $\cA_k^P$ is a linear subspace of the Hilbert space $L_2(\DD,\bR^t)$. The equality follows from the characterization (see, for example \cite{Deutsch, Powell}) of the orthogonal projection onto a linear subspace of a Hilbert space.
The Pythagorean theorem of the orthogonal projection implies that equation \eqref{Pythagorean} holds true.

Part 2 of Item (i) follows directly from \eqref{Pythagorean}.

(ii) Inequality \eqref{error-nonincreasing} is a direct consequence of  \eqref{Pythagorean}. By inequality \eqref{error-nonincreasing}, the sequence $\|\be_k^{P*}\|$, $k\in\bN$, is nonincreasing and bounded below by zero. Therefore, it has a nonnegative limit.

(iii) It suffices to prove that if $\ff_{k+1}^P\neq \b0$ for some $k\in\bN$, then inequality \eqref{error-decreasing} must hold. 
Since $\ff_{k+1}^P\neq \b0$ for the index $k$, we obtain that $\|\ff_{k+1}^P\|>0$, and thus from equation \eqref{Pythagorean}, we conclude that  \eqref{error-decreasing} must hold.

(iv)
If $\cN_{k}^P=0$ for some $k\in\bN$, then by definition \eqref{error-(k+1)-A} we observe that
$$
\be_{k+1}^P(\bW,\bbb;\bx)=\be^{P*}_{k}(\bx)-\cP_{\mu_{k+1}}\bbb, \ \ \mbox{for}\ \ \bW\in\bR^{m_{k+1}\times m_k}, \ \bbb\in\bR^{m_{k+1}}.
$$
In this case, the solution of the minimization problem \eqref{step(k+1)-A} is given by $(\bW_{k+1}^*, \bbb_{k+1}^*)$, where $\bW_{k+1}^*$ is any element in $\bR^{m_{k+1}\times m_k}$. Once again, since $\cN_{k}^P=0$ and
$$
\be^{P*}_{k}(\cdot)=\be^{P*}_{k-1}(\cdot)-\cP_{\mu_k}[\bW_{k}^*\cN_{k-1}(\cdot)+\bbb_{k}^*],
$$
we obtain that
\begin{align*}
\|\be_{k+1}^{P*}\|
    &=\|\be^{P*}_{k}(\cdot)-\cP_{\mu_{k+1}}\bbb^*_{k+1}\|\\
    &=\|\be^{P*}_{k-1}(\cdot)-\cP_{\mu_k}[\bW_{k}^*\cN_{k-1}(\cdot)+\bbb_{k}^*]-\cP_{\mu_{k+1}}\bbb^*_{k+1}\|
\end{align*}
for the index $k$. Because the average pooling operator $\cP_\mu:\bR^{t+\mu}\to \bR^t$ has the matrix representation  of full row rank, for any vector $\bbb\in\bR^t$, there exists a vector $\bc\in\bR^{t+\mu}$ such that 
$
\bbb=\cP_{\mu}\bc.
$
This together with the fact $\cP_{\mu_{k+1}}\bbb^*_{k+1}\in\bR^t$ implies that there exists some vector $\tilde\bbb\in\bR^{m_k}$ such that
$$
\cP_{\mu_{k+1}}\bbb^*_{k+1}=\cP_{\mu_k}\tilde\bbb.
$$
Therefore, we have that
$$
\|\be_{k+1}^{P*}\|
    =\|\be^{P*}_{k-1}(\cdot)-\cP_{\mu_k}[\bW_{k}^*\cN_{k-1}(\cdot)+\bbb_{k}^*+\tilde\bbb]\|.
$$
By the construction of $\bW_k^*$ and $\bbb_k^*$, we observe that
\begin{align*}
\|\be_{k+1}^{P*}\|
    &\geq \|\be^{P*}_{k-1}(\cdot)-\cP_{\mu_k}[\bW_{k}^*\cN_{k-1}(\cdot)+\bbb_{k}^*]\|=\|\be^{P*}_{k}\|.
\end{align*}
That is, for this particular index $k$, we have that
\begin{equation}\label{Rev}
    \|\be_{k+1}^{P*}\|   \geq\|\be^{P*}_{k}\|.
\end{equation}

On the other hand,  by Item (ii) of this theorem, we have that 
$$
\|\be_{j+1}^{P*}\|\leq \|\be^{P*}_{j}\|,\ \  \mbox{for all}\ \ j\in\bN.
$$
In particular, for the index $k$, we have that $\|\be_{k+1}^{P*}\|\leq \|\be^{P*}_{k}\|$, which together with inequality \eqref{Rev} leads to the equation $\|\be_{k+1}^{P*}\|= \|\be^{P*}_{k}\|$, for this particular index $k$.
\end{proof}

Note that equation \eqref{Pythagorean} is the Pythagorean identity for the neural networks learned in grade $k+1$.

We next establish the Parseval identity for functions generated by the SAL model.

\begin{theorem}\label{Theorem-error-Pooling3}
Let $\ff\in L_2(\DD,\bR^t)$. If $\ff_k^P$, $k\in\bN$, is the sequence generated by the SAL model with the average pooling, and $\be_k^{P*}$, $k\in\bN$, is the corresponding sequence of optimal error functions,
then, for all $k\in\bN$,
\begin{equation}\label{k-term-Pythagorean}
\|\ff\|^2=\sum_{j=1}^k\|\ff_j^P\|^2+    \|\be_{k}^{P*}\|^2.
\end{equation}
Moreover, if $\|\be_{k}^{P*}\|\to 0$ as $k\to \infty$, then
\begin{equation}\label{infinite-term-Pythagorean}
\|\ff\|^2=\sum_{j=1}^\infty\|\ff_j^P\|^2
\end{equation}
and 
\begin{equation}\label{ff-infinite-expansion}
\ff=\sum_{j=1}^\infty\ff_j^P,
\end{equation}
where equation \eqref{ff-infinite-expansion} holds in the sense of the  $L_2$ convergence.
\end{theorem}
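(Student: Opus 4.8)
The plan is to derive the Parseval-type identity \eqref{k-term-Pythagorean} by telescoping the one-step Pythagorean identity \eqref{Pythagorean} of Theorem \ref{Theorem-error-Pooling}, and then to obtain \eqref{infinite-term-Pythagorean} and \eqref{ff-infinite-expansion} by letting $k\to\infty$. The one ingredient not already in hand is the identity for the very first grade, namely $\|\ff\|^2=\|\ff_1^P\|^2+\|\be_1^{P*}\|^2$. This is proved exactly as \eqref{Pythagorean} was: adopting the conventions $\cN_0^P(\bx):=\bx$ and $\be_0^{P*}:=\ff$, the grade-$1$ error function \eqref{error-function1-pooling} becomes $\be_1^P(\bW_1,\bbb_1;\cdot)=\be_0^{P*}-\cP_{\mu_1}(\bW_1\cN_0^P+\bbb_1)$, so by the orthogonal-projection reformulation \eqref{step(k+1)-G2} the function $\ff_1^P$ is the orthogonal projection of $\ff$ onto the finite-dimensional --- hence closed --- subspace $\cA_1^P$ of $L_2(\DD,\bR^t)$, and the Pythagorean theorem for orthogonal projections applies.

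With the base identity established, I would prove \eqref{k-term-Pythagorean} by induction on $k$: the case $k=1$ is the base identity, and if \eqref{k-term-Pythagorean} holds for some $k$, then substituting $\|\be_k^{P*}\|^2=\|\be_{k+1}^{P*}\|^2+\|\ff_{k+1}^P\|^2$ from \eqref{Pythagorean} yields $\|\ff\|^2=\sum_{j=1}^{k+1}\|\ff_j^P\|^2+\|\be_{k+1}^{P*}\|^2$, completing the induction. For the limiting statements, assume $\|\be_k^{P*}\|\to 0$. Rearranging \eqref{k-term-Pythagorean} gives $\sum_{j=1}^k\|\ff_j^P\|^2=\|\ff\|^2-\|\be_k^{P*}\|^2$; the partial sums on the left are nondecreasing and bounded above by $\|\ff\|^2$, so the series converges, and passing to the limit produces \eqref{infinite-term-Pythagorean}. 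Finally, since $\overline{\ff}_k^P=\sum_{j=1}^k\ff_j^P$ and $\ff-\overline{\ff}_k^P=\be_k^{P*}$ by \eqref{total-ff_k+1-A} and the definition of the optimal error, we get $\bigl\|\ff-\sum_{j=1}^k\ff_j^P\bigr\|=\|\be_k^{P*}\|\to 0$, which is \eqref{ff-infinite-expansion} in the sense of $L_2$ convergence.

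No part of this is genuinely hard; I expect the only delicate point to be the base step, where one must check that the orthogonal-projection argument of Theorem \ref{Theorem-error-Pooling} remains legitimate at grade $1$ with $\be_0^{P*}:=\ff$. This amounts to verifying that $\cA_1^P$ is a closed subspace of $L_2(\DD,\bR^t)$, which holds because it is finite-dimensional, being the span of the finitely many maps $\bx\mapsto\cP_{\mu_1}(\bW_1\bx+\bbb_1)$; these lie in $L_2(\DD,\bR^t)$ since $\DD$ is bounded, the same boundedness assumption under which Lemma \ref{cN_k-inL2} is stated. Once this is granted, the entire proof is a telescoping sum followed by a routine passage to the limit.
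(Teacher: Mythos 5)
Your proposal is correct and follows essentially the same route as the paper: establish the grade-$1$ identity $\|\ff\|^2=\|\ff_1^P\|^2+\|\be_1^{P*}\|^2$ from the orthogonality of $\ff_1^P$ and $\be_1^{P*}$, telescope with \eqref{Pythagorean} to get \eqref{k-term-Pythagorean}, use monotone bounded partial sums for \eqref{infinite-term-Pythagorean}, and conclude \eqref{ff-infinite-expansion} from $\|\ff-\sum_{j=1}^k\ff_j^P\|=\|\be_k^{P*}\|\to 0$. Your explicit remark that $\cA_1^P$ is finite-dimensional, hence closed, is a small but worthwhile refinement that the paper leaves implicit.
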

\begin{proof}
From the construction \eqref{ff1-A} of approximation $\ff_1^P$ of grade 1 and the definition \eqref{Opt-Error-P} of the associated optimal error function $\be_1^{P*}$, we observe that
\begin{equation}\label{Parseval1}
    \ff=\ff_1^P+\be_1^{P*}\ \  \mbox{and}\ \ \left<\be_1^{P*}, \ff_1^P\right>=0.
\end{equation}
It follows from \eqref{Parseval1} that 
$$
\|\ff\|^2=\|\ff_1^P\|^2+\|\be_1^{P*}\|^2.
$$
By employing the above equation and repeatedly using equation \eqref{Pythagorean} in Theorem \ref{Theorem-error-Pooling}, we obtain for all $k\in\bN$ that
\begin{align*}
\|\ff\|^2&=\|\ff_1^P\|^2+\|\be_1^{P*}\|^2\\
    &=\|\ff_1^P\|^2+\|\ff_2^P\|^2+\|\be_2^{P*}\|^2\\
    &=\sum_{j=1}^k\|\ff_j^P\|^2+\|\be_k^{P*}\|^2,
\end{align*}
which gives equation \eqref{k-term-Pythagorean}.

Equation \eqref{k-term-Pythagorean} implies that 
\begin{equation}\label{Parseval-Equality**}
    \sum_{j=1}^k\|\ff_j^P\|^2\leq \|\ff\|^2<+\infty,\ \ \mbox{for all}\ \ k\in\bN.
\end{equation}
Clearly, the sequence 
$$
F_k:= \sum_{j=1}^k\|\ff_j^P\|^2
$$
is nondecreasing and bounded above according to \eqref{Parseval-Equality**}. Hence, the sequence $F_k$, $k\in\bN$, has a limit as $k\to\infty$.
That is,
\begin{equation}\label{Parseval-Equality2}
    \sum_{j=1}^\infty\|\ff_j^P\|^2<+\infty.
\end{equation}
If $\|\be_{k}^{P*}\|\to 0$ as $k\to \infty$, letting  $k\to \infty$ in the both sides of equation  \eqref{k-term-Pythagorean} with considering \eqref{Parseval-Equality2} yields the Parseval identity \eqref{infinite-term-Pythagorean}.

Finally, by equation \eqref{ff-expansion} and by the hypothesis that $\|\be_{k}^{P*}\|\to 0$ as $k\to \infty$, we conclude that
$$
\left\|\ff-\sum_{j=1}^k\ff_j^P\right\|_2=\|\be_k^{P*}\|_2\to 0, \ \ \mbox{as} \ k\to\infty.
$$
This leads to series \eqref{ff-infinite-expansion}.
\end{proof}


For the hypothesis that $\|\be_{k}^{P*}\|\to 0$ as $k\to \infty$ in Theorem \ref{Theorem-error-Pooling} to satisfy, it requires additional information on the activation. We postpone investigating this issue to a future project.

To close this section, we consider the issue of when the SAL model will terminate in a finite number of grades. We provide an answer to this question in the following theorem for the special case when the pooling is the identity operator.

\begin{theorem}\label{Theorem-error-Pooling2} Let $\ff\in L_2(\DD,\bR^t)$.
If the activation function $\sigma$ satisfies $\sigma(0)=0$, then 
either the SAL model terminates after a finite number of grades or the norms of its optimal error functions strictly decrease to a limit as the grade number increases.
\end{theorem}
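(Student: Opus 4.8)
The plan is to obtain Theorem \ref{Theorem-error-Pooling2} as a consequence of Theorem \ref{Theorem-error-Pooling} specialized to $\cP_{\mu_k}=\cI$, together with a short bootstrapping argument. By Theorem \ref{Theorem-error-Pooling}(ii) the sequence $\|\be_k^{P*}\|$, $k\in\bN$, is nonincreasing and tends to a nonnegative limit; by part (iii) it satisfies, for every $k\in\bN$, the sharp dichotomy ``either $\ff_{k+1}^P=\b0$ or $\|\be_{k+1}^{P*}\|<\|\be_k^{P*}\|$''. Hence if $\ff_{k+1}^P\ne\b0$ for \emph{all} $k$, the norms of the optimal error functions are strictly decreasing and converge to a nonnegative limit, which is the second alternative in the theorem. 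It therefore remains to treat the case in which $\ff_{k_0+1}^P=\b0$ for some $k_0\in\bN$, and for that case I would take $k_0$ to be the smallest such index and prove that the model terminates after grade $k_0$, i.e. $\ff_k^P=\b0$ and $\be_k^{P*}=\be_{k_0}^{P*}$ for all $k\ge k_0$.

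The first step is to use the hypothesis $\sigma(0)=0$ to pass from $\ff_{k_0+1}^P=\b0$ to $\cN_{k_0+1}^P=\b0$. Because the pooling is the identity, $\ff_{k_0+1}^P=\bW_{k_0+1}^*\cN_{k_0}^P(\cdot)+\bbb_{k_0+1}^*$ by \eqref{ff_k+1-A}; the vanishing of this element of $L_2(\DD,\bR^t)$ forces $\bW_{k_0+1}^*\cN_{k_0}^P(\bx)+\bbb_{k_0+1}^*=\b0$ for a.e. $\bx$, whence $\cN_{k_0+1}^P=\sigma(\b0)=\b0$ by \eqref{network(k+1)-A-Modify} and $\sigma(0)=0$. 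Moreover part (i) of Theorem \ref{Theorem-error-Pooling} (the Pythagorean identity) gives $\|\be_{k_0+1}^{P*}\|=\|\be_{k_0}^{P*}\|$. The second step is an induction on $k\ge k_0+1$ showing that $\cN_k^P=\b0$ and $\|\be_k^{P*}\|=\|\be_{k_0}^{P*}\|$. Given $\cN_k^P=\b0$, part (iv) of Theorem \ref{Theorem-error-Pooling} yields $\|\be_{k+1}^{P*}\|=\|\be_k^{P*}\|$, and then the Pythagorean identity \eqref{Pythagorean} forces $\ff_{k+1}^P=\b0$; since the pooling is the identity and $\cN_k^P=\b0$, we get $\ff_{k+1}^P=\bbb_{k+1}^*$, so $\bbb_{k+1}^*=\b0$ and therefore $\cN_{k+1}^P=\sigma(\b0)=\b0$, again by $\sigma(0)=0$. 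This closes the induction, and the two cases together give the theorem.

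The main obstacle, I expect, is exactly the propagation in the second step. The dichotomy in Theorem \ref{Theorem-error-Pooling}(iii) by itself permits the error norms to be constant at some grades and strictly decreasing at others, so one must rule out a ``stall-then-resume'' pattern and show that the first zero increment is permanent. This is precisely the role of $\sigma(0)=0$: it makes $\cN_{k_0+1}^P=\b0$, which lets part (iv) of Theorem \ref{Theorem-error-Pooling} freeze the error from then on. Without $\sigma(0)=0$ one only gets $\cN_{k_0+1}^P=\sigma(\b0)$, a generally nonzero constant vector-valued function, and the argument would have to be replaced by the observation that $\be_{k_0}^{P*}$ is already orthogonal to the constant functions (since $\ff_{k_0+1}^P=\b0$ and $\cA_{k_0+1}^P$ contains all constants), which is messier; keeping $\sigma(0)=0$ gives the clean inductive proof above.
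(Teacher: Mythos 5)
Your proposal is correct and follows essentially the same route as the paper's proof: the same case split on whether some $\ff_{k}^P$ vanishes, the same use of Items (ii) and (iii) of Theorem \ref{Theorem-error-Pooling} for the strictly decreasing case, and the same chain $\ff^P=\b0 \Rightarrow \cN^P=\b0$ (via $\sigma(0)=0$) $\Rightarrow$ Item (iv) $\Rightarrow$ part two of Item (i) $\Rightarrow$ next $\ff^P=\b0$ for the terminating case. Your only addition is to make the paper's ``repeating this process'' an explicit induction and to flag where $\sigma(0)=0$ is indispensable, which is a faithful elaboration rather than a different argument.
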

\begin{proof}
We let $\ff_k$,  $\cN_k$, $k\in\bN$, be generated by the SAL model without pooling, and $\be_k^{*}$ $k\in\bN$, be the corresponding optimal error functions. 
We consider two different cases. For the first case, we suppose that $\ff_{k}=\b0$ for some $k\in\bN$. Since $\sigma(0)=0$, by the definition of $\cN_{k}$ and the assumption that  $\ff_{k}=\b0$, we conclude that $\cN_{k}=\b0$ for the particular index $k$. According to Item (iv) Theorem \ref{Theorem-error-Pooling}, we find that 
$$
\|\be^{*}_{k+1}\|=\|\be^{*}_{k}\|, \ \ \mbox{for this particular}\ \ k. 
$$
This equation together with part two of Item (i) of Theorem \ref{Theorem-error-Pooling} ensures that $\ff_{k+1}=\b0$ for this particular index $k$. Repeating this process gives rise to the assertion that $\ff_n=\b0$, for all $n\geq k$. Therefore, the SAL model with the average pooling terminates after $k$.

We next consider the second case.
Suppose that the first case does not take place. Then, we must have that $\ff_{k}\neq\b0$ for all $k\in\bN$. In this case, it follows from Item (iii) of Theorem \ref{Theorem-error-Pooling}  that the strict inequality \eqref{error-decreasing} must hold for all $k\in\bN$. In other words, the norm sequence of optimal error functions is strictly decreasing. Moreover, by Item (ii) of Theorem  \ref{Theorem-error-Pooling}, the norm sequence of optimal error functions has a limit.
\end{proof}

Theorem \ref{Theorem-error-Pooling2} ensures convergence of the SAL process in the sense that either
it terminates after a finite number of grades or the norms of its optimal error functions strictly decrease to a limit as the grade number increases.
We note that
Theorem \ref{Theorem-error-Pooling2} requires the activation function $\sigma$ to satisfy the condition that $\sigma(0)=0$. Many activation functions such as ReLU,  leaky ReLU and Tanh satisfy this condition. When an activation function $\sigma(0)\neq 0$, we may define 
$$
\tilde\sigma(x)=\sigma(x)-\sigma(0), \ \ x\in\bR.
$$
Then, for the modified activation function $\tilde\sigma$, we have that $\tilde\sigma(0)=0$. This indicates that the condition in Theorem  \ref{Theorem-error-Pooling2} seems not a very restricted one.

\section{Smoothing of Learning Solutions}

We now turn to smoothing of the optimal error function or the learned solution of a grade in the SAL model.
The approximation accuracy of the SAL model may be constrained by noise contaminated in the optimal error function or the learned solution of a grade.  
Recall that starting grade 2, the SAL model learns the weight matrix and the bias vector of a grade from the optimal error function of the previous grade, which is defined by the subtraction of two functions. 
When the number of the grade is high, the error function which is the subtraction of two functions can be oscillatory. Direct learning from an oscillatory function may result in a low accuracy. To address this issue, we propose to apply a smoothing operator to the optimal error function or the learned solution of the current grade, before proceeding to learning of the next grade. 

A commonly used smoothing operator is an operator defined by the Gaussian function. We first describe the one dimensional case, which can be extended to a higher dimensional case without difficulties. The one dimensional Gaussian function has the form
$$
G(x):=\frac{1}{\sqrt{2\pi}}e^{-\frac{x^2}{2}}, \ \ x\in \bR.
$$
For $\tau>0$, we let
$$
G_\tau(x):=\frac{1}{\tau}G\left(\frac{x}{\tau}\right), \ \ x\in\bR.
$$
It is well-known that $G_\tau$ is an approximate identity, (see, for example, \cite{I.Daubechies, Frazier}). That is, if $f\in L_1(\bR)$, then for every Lebesgue point $x$ of $f$, there holds
\begin{equation}\label{Aprox-I}
    \lim_{\tau\to 0^+}(G_\tau *f)(x)=f(x),
\end{equation}
where $*$ denotes the convolution (Theorem 5.11 of \cite{Frazier}). It can be verified that for $\tau>0$, the function $G_\tau*f$ is sufficiently smooth and according to formula \eqref{Aprox-I}, when $\tau$ is small, $G_\tau*f$ is a good approximation of the function $f$. Thus, the convolution of $G_\tau$ provides us an ideal smoothing operator. We can construct a smooth operator for a multivariate function  via tensor product and we use $\cG_\tau$ to denote the resulting smoothing operator. 

We now describe the smoothing process.
Suppose that $\ff_i^P$ is a neural network learned in grade $i$. We apply the smoothing operator $\cG_\tau$ to either $\ff_i^P$ or $\be_i^{P*}$ before we proceed to learning of grade $i+1$. The smooth operator can alleviate the oscillation of the functions, leading to improvement of approximation accuracy. For example, when the smoothing operator is applied to the learned function $\ff_i^P$ of grade $i$, we obtain the smoothed approximation
\begin{equation}\label{smoothed-function}
    \ff_{i, \tau}^P:=\cG_\tau \ff_i^P.
\end{equation}
By the property of the Gaussian function $G_\tau$, we observe that the function $\ff_{i,\tau}^P$ is sufficiently smooth. With the smoothed learned function, we define a new optimal error function $\be_{i,\tau}^{P*}$ from which we learn a function $\ff_{i+1,\tau}^P$ for grade $i+1$. We then apply the smoothing operator to $\ff_{i+1,\tau}^P$ and we use the same notation for the resulting function by a bit of abuse of notation. For a different grade, we may choose a different smoothing parameter $\tau$. 

Theoretical results presented in section 5 for learned function $\ff_i^P$ may be extended for the smoothed learned function $\ff_{i,\tau}^P$, due to the approximation property \eqref{Aprox-I}. We leave detailed proofs of such results to interested readers.

The smoothed learned function $\ff_{i,\tau}^P$ is intimately related to a regularized solution in a  Hilbert space  determined by the Gaussian kernel. Since the Gaussian kernel is universal \cite{MXZ}, a regularized solution in the space determined by the kernel has a nice approximation property. 
Suppose that the neural network $\cN_{i-1}^P$ has been learned. For some $\tau>0$, we let
$$
G_{i, \tau}(\bx'):=\int_{\bR^s}G_\tau(\bx'-\bx)\cN_{i-1}^P(\bx)d\bx
$$
and define
$$
\HH_{i,\tau}:={\rm span} \{\cP_{\mu_i}(\bW_iG_{\tau,i}(\cdot)+\bbb_i): \bW_i\in\bR^{m_1\times m_{i-1}}, \bbb_i\in\bR^{m_i}\}.
$$
Given the error function $\be_{i-1}^{P*}$, one can learn $\ff_{\tau, i}^P$ by solving the regularization problem
\begin{equation}\label{Smoothed-learning}
    \min\{\|\be_{i-1}^{P*}-\ff_i\|_2^2+\lambda \|\ff_i\|^2_{\HH_{i, \tau}}: \ff_i\in \HH_{i,\tau}\}.
\end{equation}
Once again, the regularization problem \eqref{Smoothed-learning}
is a quadratic optimization problem. Instead of solving the quadratic optimization problem \eqref{step(k+1)-A}, we solve 
the regularization problem \eqref{Smoothed-learning}, which gives us a smoothed learned function. Notice that
the smoothed learned function $\ff_{i,\tau}^P$ defined by \eqref{smoothed-function} may be seen as a certain solution of the regularization problem \eqref{Smoothed-learning}. 
We postpone a systemic investigation of this connection to a future project.

\section{Computational Issues}

We discuss in this section several critical computational issues related to implementation of the SAL model. They include the ``optimal choice'' of activation function, fast smoothing of  the learned solution of a grade and efficient algorithms for solving the quadratic/convex optimization problems that appear in the SAL model. Properly addressing these issues contributes positively to the success of the SAL model for learning of a DNN.

An issue crucial for the effectiveness of the SAL model is the choice of activation functions for each grade. One may use a fixed predetermined activation function for all grades in the SAL model, and may also change to a different activation function in a certain grade. The SAL model may be more effective if we choose activation functions from a linear combination of a collection of activation functions according to given data for different grades of learning. Since in each grade, the SAL model solves a quadratic/convex optimization problem, it is convenient for us to choose an activation function by solving another quadratic optimization problem after the weight matrix and bias vector have been chosen for the current grade.

We propose to use an ``optimal combination'' of a predetermined collection of activation functions 
$\{\sigma_j: j=1, 2, \dots, L\}$ for the activation function of grade $k$. Specifically, in grade $k$ we suppose that the weight matrix $\bW^*_k$ and bias vector $\bbb_k^*$ have been learned
from $\be_{k-1}^{P*}$. At this step, $\ff_k^P$ and $\be_k^{P*}$ have been found. Instead of picking a fixed activation function to define $\cN_k^P$, we wish to choose an appropriate activation function from a linear combination of the activation functions $\sigma_1, \sigma_2, \dots, \sigma_L$ for this grade, with the coefficients $\alpha^*_1, \alpha_2^*, \dots,\alpha^*_L$ determined by the optimal error function $\be_{k}^{P*}$ of grade $k$.
Namely, we find the parameters $\alpha^*:=[\alpha^*_1, \alpha_2^*, \dots,\alpha^*_L]^\top\in\bR^L$ by solving the quadratic minimization problem
\begin{equation}\label{step(k)-sigma}
\min\left\{\left\|\be_{k}^{P*}(\cdot)-\cP_{\mu_k}\sum_{j=1}^L\alpha_j\sigma_j(\bW_k^*\cN_{k-1}^P(\cdot) +\bbb_k^*)\right\|^2: \alpha:=[\alpha_1,\dots,\alpha_L]^\top\in \bR^L\right\},
\end{equation}
and then we define 
$$
\sigma_{\alpha^*}:=\sum_{j=1}^L\alpha^*_j\sigma_j
$$
as the optimal activation function of grade $k$.
The neural network (with the average pooling) with the optimal activation function $\sigma_{\alpha^*}$ of grade $k$ is now defined by
\begin{equation}\label{network(k)-A-Modify}
\cN_{k,\alpha^*}^P(\bx):=\sigma_{\alpha^*}(\bW_{k}^*\cN_{k-1}^P(\bx)+\bbb^*_{k}), \ \ \bx\in\bR^s
\end{equation}
and the optimal error function of grade $k$ is updated by
\begin{equation}\label{optimal-error-k-A}
    \be_{k+1,\alpha^*}^{P}(\bW_{k+1},\bbb_{k+1};\bx):= \be_{k}^{P*}(\bx)-\cP_{\mu_{k+1}}(\bW_{k+1}\cN_{k,\alpha^*}^P(\bx)+\bbb_{k+1}), \ \ \bx\in\bR^s.
\end{equation}
The weight matrix $\bW_{k+1}^*$ and the bias vector $\bbb_{k+1}^*$ of grade $(k+1)$ will be found by solving the optimization problem \eqref{step(k+1)-A} with the objective function $\be_{k+1}^{P}(\bW_{k+1},\bbb_{k+1}; \cdot)$ being replaced by  $\be_{k+1,\alpha^*}^{P}(\bW_{k+1},\bbb_{k+1}; \cdot)$.

We now discuss computing the smoothed learned function defined by equation \eqref{smoothed-function}. In numerical computation, computing $\ff_{i,\tau}^P$ requires numerical integration. After using a numerical quadrature scheme, the right-hand-side of \eqref{smoothed-function} becomes a discrete convolution. When the quadrature nodes are chosen to be equal-spaced, one can apply the fast Fourier transform (FFT) to the resulting discrete convolution and compute it by utilizing a fast algorithm. To apply FFT, one may need to make appropriate boundary extension of the discrete form of $\ff_i^P$.

Finally, we turn to addressing solving the optimization problems for the SAL model.
All optimization problems involved in the SAL model, including those for determining the weight matrices and bias vectors, and those for choosing the activation functions, are quadratic/convex minimization problems. They are typical convex optimization problems with smooth gradients.
Hence, they can be efficiently solved by employing the Nesterov algorithm. When sparse regularization is needed, the corresponding sparse regularization problems of these optimization problems may be solved by using an FISTA type algorithm \cite{Beck}. Both the Nesterov and FISTA algorithms have a ${\cal O}(1/j^2)$ convergence rate, where $j$ is the number of iterations.
When implementing the Nesterov algorithm for solving the optimization problem \eqref{step(k+1)-G}, one needs to estimate the step-sizes of the iterations, which are related to the Lipschitz constant of the gradient of the objective function of the optimization problem. From the definition of the objective function of the optimization problem, it is clear that this can be done by computing the value of the neural network $\cN_{k-1}^P$ which has been obtained before solving the optimization problem \eqref{step(k+1)-G}. When the optimization problem is quadratic, one may recast it into a linear system, which can be efficiently solved by the conjugate gradient method or the
preconditioned conjugate gradient method.


\section{Numerical Examples}
In this section, we present proof-of-concept examples to test the numerical performance of the proposed SAL model in comparison with the standard single-grade (SSG) deep learning model. We consider approximating a non-differentiable  function and an oscillatory vector-valued function by deep neural networks. All the experiments reported in this section are performed with Python on the First Gen ODU HPC Cluster, where computing jobs are randomly placed on an X86\_64 server with the computer nodes Intel(R) Xeon(R) CPU E5-2660 0  @ 2.20GHz (16 slots), Intel(R) Xeon(R) CPU E5-2660 v2  @ 2.20GHz (20 slots), Intel(R) Xeon(R) CPU E5-2670 v2  @ 2.50GHz (20 slots), Intel(R) Xeon(R) CPU E5-2683 v4 @ 2.10GHz (32 slots).

In our experiments, for the SAL model, we solve the quadratic optimization problems of all grades by using the Nesterov algorithm, and for the SSG model, we solve the non-convex optimization problems by using the Adam algorithm \cite{Kingma} with learning rate $\alpha$ (to be specified later) and with initial guesses determined by the method proposed in \cite{He2015}. 


The training and testing data for the numerical examples for approximation of function $\ff$ are described as follows:

\noindent\textbf{Training data}: $\{(x_n, y_n)\}_{n = 1}^m \subset [a-\delta, b+\delta] \times \mathbb{R}^t$, where $x_n$'s are equally spaced on $[a-\delta, b+\delta]$, and for given $x_n$, the corresponding $y_n$ is computed by
$y_n := \ff(x_n)$. Here, $\delta\geq 0$ is chosen for possible boundary extension.

\noindent\textbf{Testing data}: $\{(x'_n, y'_n)\}_{n = 1}^{m'} \subset [a, b] \times \mathbb{R}^t$, where $x'_n$'s are random uniform distribution on $[a, b]$ and for given $x'_n$, the corresponding $y'_n$ is computed by $y'_n := \ff(x'_n)$. To avoid randomness, we use $numpy.random.seed(1)$

The relative squared error on the training data for prediction $\hat y_n$ of $y_n$ in $\mathbb{R}^t$ is defined by 
$$
\mathrm{rse(train)}:= \frac{\sum_{n = 1}^N\|\hat{y}_n - y_n\|^2}{\sum_{n = 1}^N\|y_n\|^2}
$$
Likewise, for an approximation $\hat{y}_n$ of $y'_n$, we define the relative squared error on the testing data by
\begin{equation*}
\mathrm{rse(test)}:= \frac{\sum_{n = 1}^{N'}\|\hat{y}_n - y'_n\|^2}{\sum_{n = 1}^{N'}\|y'_n\|^2}.
\end{equation*}

For numerical implementation of the SAL model, the smoothing process \eqref{smoothed-function} is conducted in a discrete form obtained from numerical integration of the smoothing operator. Specifically, a learned function (or a component) $f_j$ of grade $j$ is smoothed by the local discrete smoothing operator
\begin{equation}\label{D-Smoothing}
    \hat f_j(x) :=\frac{b_x-a_x}{M} \sum_{i=1}^{M}G_\tau(x-y_i)f_j (y_i), \ \ x\in [a_x, b_x],
\end{equation}
where $M$ denotes the number of nodes used for the numerical integration of the smoothing operator, $y_i := \frac{b_x-a_x}{M}i +a_x$, $i=1,2,\dots, M$, and the value of $\tau$ for different grades will be specified.
The values $a_x$ and $b_x$ that appear in equation \eqref{D-Smoothing} will be given for specific examples.

\subsection{Learning a non-differentiable function}

In this example, we learn the non-differentiable function
\begin{equation}\label{non-differentiable function}
 f(x) =\ff(x):= (x+1)\left(\phi_4 \circ \phi_3 \circ \phi_2 \circ \phi_1\right)(x),  \ \ x\in [-1, 1]   
\end{equation}
where
\begin{align*}
   & \phi_1(x) :=|\cos(\pi(x - 0.3)) - 0.7|,\ \
    \phi_2(x) := |\cos(2\pi (x - 0.5)) - 0.5|,\\
   & \phi_3(x) := -|x - 1.3| + 1.3,\ \
    \phi_4(x) := -|x - 0.9| + 0.9.
\end{align*}
For this example, $[a,b]:=[-1, 1]$, $\delta:=0.1$, $m:=5,001$, $m':=1,001$ and $t:=1$. Since all functions $\phi_j$ involve the absolute value function, $f$ is not differentiable.

In this example, we compare accuracy and training time of the SAL model with those of the SSG model. For the SAL model, we employ two network structures described below.


\noindent\textbf{SAL-1} composes of one input layer, 18 hidden layers of uniform width 300 and one output layer. 

\noindent\textbf{SAL-2} composes of one input layer, 28 hidden layers of  width 300 (layers 1-8), 500 (layers 9-12), 600 (layers 13-16), 700 (layers 17-20), 800 (layers 21-24), 900 (layers 25-28) and one output layer.

To ensure fair comparison, for the SSG model, we consider 21 different network 
structures, where 20 structures with uniform widths are listed in Table \ref{Structures-for-SSG} and structure SSG-21 with variable widths described below. Note that SSG-21 is similar to structure SAL-2 for the SAL model. 

\begin{table}[]
\centering
\caption{Network structures (hidden layers) for the SSG model.}
    \label{Structures-for-SSG}
    \scriptsize{
    \begin{tabular}{l||c|c}
    \hline
     
structure  & width
 & hidden layer \#   \\
  \hline
 SSG-1  & 50 & 6  \\
SSG-2 &  50 & 10  \\
SSG-3 &   50 & 14 \\
SSG-4 & 50 & 18   \\
SSG-5 & 50  & 20  \\
SSG-6 & 100  & 6   \\
SSG-7 & 100  & 10   \\
SSG-8 & 100  & 14   \\
SSG-9 & 100  & 18   \\
SSG-10 & 100  & 20   \\
SSG-11 & 200  & 6   \\
SSG-12 & 200  & 10   \\
SSG-13 & 200  & 14   \\
SSG-14 & 200  & 18   \\
SSG-15 & 200  & 20   \\
SSG-16 & 300  & 6   \\
SSG-17 & 300  & 10   \\
SSG-18 & 300  & 14   \\
SSG-19 & 300  & 18   \\
SSG-20 & 300  & 20   \\
\hline

\end{tabular}
    
    }
\end{table}

\noindent\textbf{SSG-21} composes of one input layer, 20 hidden layers of  width 300 (layers 2-8), 500 (layers 9-12), 600 (layers 13-16), 700 (layers 17-20),  and one output layer.

For both the SAL model and the SSG model,  we use $\frac{1}{2}\sin x+ \frac{1}{2}\cos x$ as the activation function for the first and second hidden layers, and use the ReLU activation function for the remaining hidden layers, and use the identity activation function for the output layer. The parameters involved in the discrete smoothing operator \eqref{D-Smoothing} for the SAL model are chosen as $a_x:=x-100h$, $b_x:=x+100h$, where $h:=\frac{2}{5000}$ and $M:=201$
for this example. For the SAL model, we only need to solve a quadratic optimization problem for each grade. The stopping
criterion for each grade is either the iteration number equal to 5,000 or the relative error between the
function values of two consecutive steps less than the given number $\epsilon$. The numbers of iterations reported in Tables \ref{table: example mul-dnn results} and \ref{table: example mul-dnn results:SAL-2} are the actual numbers used in the iterations.


\begin{table}[]
    \centering
    \caption{The SAL model with structure SAL-1 for learning function \eqref{non-differentiable function}.}
    \label{table: example mul-dnn results}
    \scriptsize{
    \begin{tabular}{c||c|c|c|c|c|c}
    \hline
       grade    &  $\tau$  &   $\epsilon$ &  iteration \#    & train time (second)& rse(train)  & rse(test) \\
    \hline  
       1      &0&     1e-6 & 24 & 0.34 & 1.50e-1 & 1.41e-1\\
       2   &0&     1e-6 & 1,418 & 15.65 &  1.51e-1 & 1.43e-1\\
       3   &0&    1e-6 & 257 & 2.58 &  1.52-1 & 1.44e-1\\
       4   & 6e-3 &    1e-7 & 4,999 & 56.83 & 5.71e-2 & 5.38e-2\\
       5  &6e-3& 1e-7 & 4,999 & 48.70 &3.27e-3 & 3.27e-3\\
    6   & 6e-3 & 1e-7 & 4,999 & 48.92 & 5.60e-4 & 5.30e-4\\
    7   &3e-3& 1e-7 & 4,999 &  48.58 &  1.12e-4 &  1.06e-4\\
    8   &3e-3& 1e-7 & 4,999 & 54.33 & 5.29e-5 &  5.27e-5\\
    9  &1e-3& 1e-7 & 4,999 &   48.71 & 2.75e-5 &  2.99e-5\\  
    10  &1e-3& 1e-7 & 3,677 &  41.97  & 2.21e-5 &  2.48e-5\\ 
    11  &4e-4& 1e-7 & 2,892 &  28.22 & 1.86e-5 &  2.10e-5\\ 
    12  &4e-4& 1e-7 & 1,748 &  18.22 & 1.58e-5 &  1.74e-5\\     
    13  & 4e-4 & 1e-7 & 1,138 & 11.80 & 1.33e-5 & 1.39e-5\\
    14  &4e-4& 1e-7 & 1,874 &  19.72 &  1.19e-5&  1.22e-5\\
    15   &2e-5 & 1e-7 & 1,437 & 14.49  & 1.09e-5 &  1.16e-5\\
    16   &2e-5 & 1e-7 & 861 &   9.13 & 9.88e-6 &  1.05e-5\\  
    17  &1e-5 & 1e-7 & 754 &  9.24 & 8.90e-6 &  9.69e-6\\ 
    18   &1e-5& 1e-7 & 1,175 &  13.81 & {\bf 8.19e-6} & {\bf  9.01e-6}\\ 
    \hline  
    total time  &   \multicolumn{6}{c}  {\bf 491.24}\\
    \hline
\end{tabular}
    
    }
\end{table}

\begin{table}[]
\centering
\caption{The SAL model with structure SAL-2 for learning function \eqref{non-differentiable function}.}
    \label{table: example mul-dnn results:SAL-2}

    \scriptsize{
    \begin{tabular}{c||c|c|c|c|c|c}
    \hline
       grade    &  $\tau$  &   $\epsilon$ &  iteration \#    & train time (second)& rse(train)  & rse(test) \\
    \hline   
       1      &0&      1e-6 & 24 & 0.33 & 1.50e-1 & 1.41e-1\\
       2  &0&     1e-6 & 1,418 &15.65 &  1.51e-1 & 1.43e-1\\
       3  &0&    1e-6 & 257 & 2.58 &  1.52e-1 & 1.44e-1\\
       4  &6e-3&    1e-7 & 4,999 &  56.83 & 5.71e-2 & 5.38e-2\\
       5  &6e-3& 1e-7 & 4,999 & 48.70 &3.27e-3 & 3.27e-3\\
    6  & 6e-3 & 1e-7 & 4,999 & 48.92 & 5.60e-4 & 5.30e-4\\
    7   &1e-3& 1e-7 & 4,999 & 49.37 &  8.74e-5 &  8.53e-5\\
    8   &1e-3& 1e-7 & 4,999 & 56.62 & 4.32e-5 &  4.31e-5\\
    9  &1e-3& 1e-7 & 4,397 &  87.54 & 3.24e-5 &  3.22e-5\\  
    10  &1e-3& 1e-7 & 4,999 &  113.28 & 1.51e-5 &  1.54e-5\\ 
    11  &1e-3& 1e-7 & 4,999 &   119.59 & 1.05e-5 &  1.08e-5\\ 
    12   &1e-3& 1e-7 & 4,999 &  119.06 & 8.57e-6 &  8.66e-6\\   
    13  &4e-4& 1e-7 & 4,999 &   155.41 & 7.03e-6 &  6.97e-6\\  
    14  &4e-4& 1e-7 & 4,732 &   151.11 & 5.14e-6 &  4.82e-6\\ 
    15  &4e-4& 1e-7 & 4,999 &   163.80 & 4.05e-6 &  3.70e-6\\
    16  &4e-4& 1e-7 & 4,137 &    135.94 & 3.25e-6 &  2.95e-6\\   
    17  &4e-4& 1e-7 & 2,259 &  89.00 & 2.75e-6 &  2.44e-6\\   
    18  &4e-4& 1e-7 & 4,621 &   189.52 & 2.21e-6 &  2.00e-6\\  
    19  &6e-5& 1e-7 & 2,993 &   115.34 & 1.81e-6 &  1.54e-6\\ 
    20  &6e-5& 1e-7 & 3,440 &   132.35 & 1.53e-6 &  1.24e-6\\ 
    21  &1e-5& 1e-7 & 2,131 &   82.05 & 1.31e-6 &  1.07e-6\\  
    22  &1e-5& 1e-7 & 3,287 &   141.12 & 1.11e-6 &  8.83e-7\\ 
    23  &1e-5& 1e-7 & 2,568 &   109.57 & 9.43e-7 &  7.63e-7\\
    24  &1e-5& 1e-7 & 2,552 &    108.21 & 8.33e-7 &  6.76e-7\\   
    25   &0& 1e-7 & 1,532 &  90.77 & 7.15e-7 &  6.00e-7\\   
    26  &0& 1e-7 & 2,416 &   154.81 & 6.18e-7 &  5.29e-7\\  
    27  &0& 1e-7 & 3,071 & 221.30 & 6.36e-7 & 4.88e-7\\ 
    28  &0& 1e-7 & 2,798 &   204.32 & {\bf 4.71e-7} &  {\bf 4.45e-7}\\ 
    \hline
    total time  &   \multicolumn{6}{c}  {\bf 2,693.09}\\
    \hline
\end{tabular}
    }
\end{table}

\begin{table}[]
\centering

    \caption{The SSG model with width 50 for learning function \eqref{non-differentiable function}}
    \label{table: single results 50 nondifferentiable}
\footnotesize{ 
    \begin{tabular}{c||c|c|c|c|c|c}
\hline
structure  & $\alpha$ &   $\epsilon$ &  epoch    & train time (second)& rse(train) & rse(test)\\
  \hline
SSG-1 &  1e-3& 1e-7 & 1,999 & 695.54 & 1.45e-4 & 1.14e-4\\
SSG-1 &  1e-3& 1e-7 & 4,999 & 1,753.83 & 7.82e-5 & 7.42e-5\\
SSG-1 &  1e-3& 1e-7 & 6,999 & 2,459.353 & 1.27e-5 & 1.20e-5\\
SSG-1 &  1e-3& 1e-7 & 9,999 & 3,532.53 & 1.34e-5 & 1.30e-5\\
\hline
SSG-2 &  1e-3& 1e-7 & 1,999 & 1,090.30 & 1.31e-4 & 1.15e-4\\
SSG-2 &  1e-3& 1e-7 & 4,999 & 2,756.39 & 1.11e-5 & 9.72e-6\\
SSG-2 &  1e-3& 1e-7 & 6,999 &  3,868.43 & 7.18e-5 & 7.23e-5\\
SSG-2 &  1e-3& 1e-7 & 9,999 & 5,560.13 & 2.48e-6 & 2.19e-6\\
\hline
SSG-3 &  1e-3& 1e-7 & 1,999 & 1,526.43 & 3.83e-5 & 3.82e-5\\
SSG-3 &  1e-3& 1e-7 & 4,999 &  3,855.37 & 5.11e-5 & 5.44e-5\\
SSG-3 &  1e-3& 1e-7 & 6,999 & 5,405.37 & 7.01e-5 & 6.86e-5\\
SSG-3 &  1e-3& 1e-7 & 9,999 & 7,779.87 & 3.55e-6 & 3.47e-6\\
\hline
SSG-4 &  1e-3& 1e-7 & 1,999 & 1,264.28 & 1.17e-4 & 1.21e-4\\
SSG-4 &  1e-3& 1e-7 & 4,999 &  3,216.60 & 2.76e-4 & 2.65e-4\\
SSG-4 &  1e-3& 1e-7 & 6,999 & 4,530.62 & 6.05e-6 & 5.75e-6\\
SSG-4 &  1e-3& 1e-7 & 9,999 & 6,528.63 & 8.31e-6 & 8.41e-6\\
\hline
SSG-5 &  1e-3& 1e-7 & 1,999 & 1,425.75 & 7.52e-5 & 7.77e-5\\
SSG-5 &  1e-3& 1e-7 & 4,999 &  3,599.71 & 2.42e-6 & 2.48e-6\\
SSG-5 &  1e-3& 1e-7 & 6,999 & 5,066.60 & 3.05e-5 & 3.14e-5\\
SSG-5 &  1e-3& 1e-7 & 9,999 & 7,310.38 & 1.33e-5 & 1.46e-5\\
\hline
    \end{tabular}
      }
\end{table}

\begin{table}[]
    \centering
  \caption{The SSG model with width 100 for learning function \eqref{non-differentiable function}}
    \label{table: single results 100 nondifferentiable} 
\footnotesize{

    \begin{tabular}{c||c|c|c|c|c|c}
\hline
structure  & $\alpha$ &   $\epsilon$ &  epoch    & train time (second)& rse(train)  & rse(test)\\
\hline
SSG-6 &  1e-3& 1e-7 & 1,999 & 1,294.46 & 3.35e-5 & 3.53e-5\\
SSG-6 &  1e-3& 1e-7 & 4,999 & 3,313.18 & 5.81e-6 & 5.71e-6\\
 SSG-6 &  1e-3& 1e-7 & 6,999 & 4,661.39 & 3.77e-6 & 4.06e-6\\
SSG-6 &  1e-3& 1e-7 & 9,999 & 6,779.26 & 2.07e-6 & 2.19e-6\\
 \hline
SSG-7 &  1e-3& 1e-7 & 1,999 & 2,110.72 & 3.47e-5 & 3.84e-5\\
SSG-7 &  1e-3& 1e-7 & 4,999 & 5,426.27 & 3.44e-6 & 3.40e-6\\
SSG-7 &  1e-3& 1e-7 & 6,999 &  7,649.76 & 6.84e-6 & 5.95e-6\\
SSG-7 &  1e-3& 1e-7 & 9,999 & 11,130.31 & 1.22e-6 & 1.15e-6\\
\hline
SSG-8 &  1e-3& 1e-7 & 1,999 & 2,969.82 & 1.52e-4 & 1.57e-4\\
SSG-8 &  1e-3& 1e-7 & 4,999 &  7,578.61 & 1.39e-4 & 1.39e-4\\
SSG-8 &  1e-3& 1e-7 & 6,999 & 10,704.66 & 3.86e-6 & 3.63e-6\\
SSG-8 &  1e-3& 1e-7 & 9,999 & 15,560.65 & 1.47e-6 & 1.52e-6\\
\hline
SSG-9 &  1e-3& 1e-7 & 1,999 & 3,717.08 & 1.05e-4 & 1.01e-4\\
SSG-9&  1e-3& 1e-7 & 4,999 &  9,495.76 & 1.16e-5 & 1.83e-5\\
SSG-9 &  1e-3& 1e-7 & 6,999 & 13,385.62 & 4.37e-6 & 4.10e-6\\
SSG-9 &  1e-3& 1e-7 & 9,999 & 19,467.99 & 4.51e-6 & 4.27e-6\\
\hline
SSG-10 &  1e-3& 1e-7 & 1,999 & 4,891.87 & 5.82e-5 & 5.72e-5\\
SSG-10 &  1e-3& 1e-7 & 4,999 &  12,507.98 & 3.00e-6 & 2.73e-6\\
SSG-10 &  1e-3& 1e-7 & 6,999 & 17,688.91 & 2.26e-6 & 2.11e-6\\
SSG-10 &  1e-3& 1e-7 & 9,999 & 25,676.95 & 5.88e-7 & 5.80e-7\\
\hline
    \end{tabular}
     }
\end{table}

\begin{table}[]
    \centering
  \caption{The SSG model with width 200 for learning function \eqref{non-differentiable function}}
    \label{table: single results 200 nondifferentiable} 
\footnotesize{
   
    \begin{tabular}{c||c|c|c|c|c|c}
\hline
structure  & $\alpha$ &   $\epsilon$ &  epoch    & train time (second)& rse(train)  & rse(test)\\
  \hline
SSG-11 &  1e-3& 1e-7 & 1,999 & 3,635.98 & 8.58e-6 & 8.64e-6\\
SSG-11 &  1e-3& 1e-7 & 4,999 & 9,278.44 & 8.22e-5 & 7.70e-5\\
SSG-11 &  1e-3& 1e-7 & 6,999 & 13,070.06 & 3.20e-6 & 3.03e-6\\
SSG-11 &  1e-3& 1e-7 & 9,999 & 19,161.92 & 2.19e-5 & 2.09e-5\\
 \hline
SSG-12 &  1e-3& 1e-7 & 1,999 & 6,291.03 & 4.12e-5 & 3.96e-5\\
SSG-12 &  1e-3& 1e-7 & 4,999 & 16,292.61 & 4.11e-6 & 3.81e-6\\
SSG-12 &  1e-3& 1e-7 & 6,999 &  22,975.50 & 1.32e-5 & 1.27e-5\\
SSG-12 &  1e-3& 1e-7 & 9,999 & 33,725.46 & 3.76e-7 & 4.05e-7\\
\hline
SSG-13 &  1e-3& 1e-7 & 1,999 & 8,705.78 & 2.55e-5 & 2.53e-5\\
SSG-13&  1e-3& 1e-7 & 4,999 &  22,534.86 & 2.06e-5 & 2.02e-5\\
SSG-13 &  1e-3& 1e-7 & 6,999 & 31,749.15 & 5.87e-6 & 6.08-6\\
SSG-13 &  1e-3& 1e-7 & 9,999 & 46,641.80 & 3.90e-7 & 4.42e-7\\
\hline
SSG-14 &  1e-3& 1e-7 & 1,999 & 11,275.22 & 3.87e-5 & 3.66e-5\\
SSG-14 &  1e-3& 1e-7 & 4,999 &  28,947.24 & 2.03e-5 & 2.18e-5\\
SSG-14 &  1e-3& 1e-7 & 6,999 & 40,824.20 & 6.74e-6 & 6.62e-6\\
SSG-14&  1e-3& 1e-7 & 9,999 & 59,839.41 & 1.52e-6 & 1.63e-6\\
\hline
SSG-15&  1e-3& 1e-7 & 1,999 & 12,519.85 & 2.45e-5 & 2.54e-5\\
SSG-15 &  1e-3& 1e-7 & 4,999 &  32,096.41 & 3.27e-6 & 3.27e-6\\
SSG-15 &  1e-3& 1e-7 & 6,999 & 45,226.71 & 9.36e-6 & 8.67e-6\\
SSG-15 &  1e-3& 1e-7 & 9,999 & 66,309.12 & 2.99e-6 & 2.54e-6\\
\hline
    \end{tabular}
     }
\end{table}

\begin{table}[]
\centering
  \caption{The SSG model with width 300 for learning function \eqref{non-differentiable function}}
    \label{table: single results 300 nondifferentiable} 
\footnotesize{
\begin{tabular}{c||c|c|c|c|c|c}
\hline
structure  & $\alpha$ &   $\epsilon$ &  epoch    & train time (second)& rse(train)  & rse(test)\\
  \hline
SSG-16 &  1e-3& 1e-7 & 1,999 & 7,628.01 & 1.14e-5 & 1.10e-5\\
SSG-16 &  1e-3& 1e-7 & 4,999 & 19,262.90 & 5.06e-5 & 4.74e-5\\
SSG-16 &  1e-3& 1e-7 & 6,999 & 27,085.22 & 1.16e-5 & 1.21e-5\\
SSG-16 &  1e-3& 1e-7 & 9,999 & 39,964.45 & 5.56e-7 & 5.07e-7\\
 \hline
SSG-17 &  1e-3& 1e-7 & 1,999 & 13,244.32 & 2.73e-5 & 2.57e-5\\
SSG-17 &  1e-3& 1e-7 & 4,999 & 34,003.43 & 2.04e-4 & 2.20e-4\\
SSG-17 &  1e-3& 1e-7 & 6,999 &  22,975.50 & 1.32e-5 & 1.27e-5\\
SSG-17 &  1e-3& 1e-7 & 9,999 & 33,725.46 & 3.76e-7 & 4.05e-7\\
\hline
SSG-18 &  1e-3& 1e-7 & 1,999 & 8,705.78 & 2.55e-5 & 2.53e-5\\
SSG-18 &  1e-3& 1e-7 & 4,999 &  22,534.86 & 2.04e-4 & 2.20e-4\\
SSG-18 &  1e-3& 1e-7 & 6,999 & 47,986.34 & 4.88e-6 & 4.83-6\\
SSG-18 &  1e-3& 1e-7 & 9,999 & 70,782.28 & 5.59e-5 & 5.01e-5\\
\hline
SSG-19 &  1e-3& 1e-7 & 1,999 & 22,536.85 & 1.60e-6 & 1.44e-6\\
SSG-19 &  1e-3& 1e-7 & 4,999 & 57,989.43 & 2.06e-6 & 2.16e-6\\
SSG-19 &  1e-3& 1e-7 & 6,999 & 81,764.25 & 4.95e-6 & 5.00e-6\\
\hline
SSG-20 &  1e-3& 1e-7 & 1,999 & 26,002.99 & 1.09e-6 & 9.85e-7\\
SSG-20&  1e-3& 1e-7 & 4,999 &  65,254.11 & 9.56e-6 & 9.64e-6\\
SSG-20 &  1e-3& 1e-7 & 6,999 & 91,917.19 & 3.22e-6 & 3.57e-6\\
\hline
    \end{tabular}

    }
\end{table}

\begin{table}[]
\centering
  \caption{The SSG model with variable widths for learning function \eqref{non-differentiable function}}
    \label{table: single results SSG-21 nondifferentiable} 
\footnotesize{
\begin{tabular}{c||c|c|c|c|c|c}
\hline
structure  & $\alpha$ &   $\epsilon$ &  epoch    & train time (second)& rse(train)  & rse(test)\\
  \hline
SSG-21& 1e-3  & 1e-7& 1,999 & 63,083.11 & 1.20e-5 &1.11e-5\\
SSG-21& 1e-3  & 1e-7& 4,999 & 164,562.52 &  7.29e-6&6.77e-6\\
\hline
    \end{tabular}
    
    }
\end{table}

We report the numerical results in Tables \ref{table: example mul-dnn results}-\ref{table: single results SSG-21 nondifferentiable}. In the tables, $\epsilon$ is the stopping error for iterations. From Table \ref{table: example mul-dnn results}, we observe that the SAL model with structure SAL-1 generates an approximation with accuracy: res(train) = 8.19e-6 and res(test) = 9.01e-6 and total training time 491.24 seconds. While the SSG model with structure SSG-4 generates approximations with nearly comparable accuracy res(train) = 8.31-6, res(test) = 8.41-6, total training time 6,528.63 seconds (13.3 times as that of the SAL model), see Table \ref{table: single results 50 nondifferentiable}. From Table \ref{table: example mul-dnn results:SAL-2}, the SAL model with structure SAL-2 generates an approximation with accuracy: res(train) = 4.71e-7, res(test) = 4.45e-7 and total training time 2,693.09 seconds. While the SSG model with structure SSG-10, SSG-12, SSG-16 generate approximations with nearly comparable accuracy, respectively, res(train) = 5.88e-7, res(test) = 5.80e-7, total training time 25,676.95 seconds (9.5 times as that of the SAL model), res(train) = 3.76e-7, res(test) = 4.05e-7, total training time 33,725.46 seconds (12.5 times as that of the SAL model), and res(train) = 5.56e-7, res(test) = 5.07e-7, total training time 39,964.45 seconds (14.8 times as that of the SAL model). These numerical results reveals that with comparable accuracy for both training and test data, the SAL model outperforms the SSG model with various network structures in 9.5-14.8 times speedup. The SSG model with all other network structures does not produce results comparable to those produced by the SAL model.

From Tables \ref{table: example mul-dnn results} and \ref{table: example mul-dnn results:SAL-2}, we see that for the SAL model, the quadratic optimization problems for all grades can be efficiently solved by the Nesterov algorithm. The computing time for all grades is relatively small. For both network structures, the SAL model exhibits fast convergence. Moreover, as a new grade is added to the approximation, the errors for both training data and test data reduce. This confirms the theoretical results established in section 5.

\subsection{Learning an oscillatory vector-valued function}
In our second example, we consider learning the oscillatory vector-valued function
\begin{equation}\label{oscillatory function}
\ff(x) := \left(\psi_1(x), \psi_2(x), \ldots, \psi_{20}(x)\right)^{\top}, \ \ x\in [0,1],  
\end{equation}
where 
$$
\psi_k(x):= \left(a_k x^2 + b_k x + c_k\right)\sin\left(100x \right),    \ \ x \in [0, 1], \ \ k=1, 2, \dots, 20.
$$
The coefficients $a_k$ are chosen by $\textbf{a} = 5*np.random.randn(20)$, $b_k$ by $\textbf{b} = -5*np.random.randn(20)$, and $c_k$ by $\textbf{c} = 10*np.random.randn(20)$. 
To avoid randomness, we use $np.random.seed(1)$. In this example, $[a,b]:=[0,1]$, $\delta:=0$, $m:=5,000$, $m':=1,000$, and $t:=20$.

For the SAL model, we employ the network structure:

\noindent\textbf{SAL-3} composes of one input layer, 10 hidden layers of uniform width 300 and one
output layer.

For the SSG model, we consider 20 different network structures listed in Table \ref{Structures-for-SSG}.
For both the SAL and SSG models, we use $\frac{1}{2}\sin x+ \frac{1}{2}\cos x$ activation function for the first hidden layer, and use ReLU activation function for the remaining hidden layers, and identity activation for output layer.

We adopt two stopping
criteria for iterations for solving the optimization problems for the grades of the SAL model. Stopping criterion I is either the relative error between the
function values of two consecutive steps less than the given number $\epsilon$ or the iteration number equal to 10,000 for grade 1, to 20,000 for grades 2-4, to 30,000 for grades 5-8, and to 40,000 for grades 9-10. Stopping criterion II is either the relative error between the
function values of two consecutive steps less than the given number $\epsilon$ or the iteration number equal to 50,000 for all grades. The numbers of iterations reported in Tables \ref{table:SAL-3-I} and \ref{table:SAL-3-II} are the actual numbers used in the iterations. 
The parameters involved in the discrete smoothing operator \eqref{D-Smoothing} for the SAL model are chosen as $a_x:=x-6\tau$, $b_x:=x+6\tau$,  and $M:=200$
for this example. Here, $\tau$ varies from grade to grade and see Tables \ref{table:SAL-3-I} and \ref{table:SAL-3-II} for details.
The stopping criterion for the SSG model is the same as that in the first example.

Numerical results for this example are reported in Tables \ref{table:SAL-3-I}-\ref{table: single results 300}. These results show that the proposed SAL model outperforms the SSG model significantly. The SAL model with stopping criteria I and II generates, respectively, approximations with accuracy rse(train) = 6.13e-8, rse(test) = 5.77e-8, total training time 4,095.39 seconds, and rse(train) = 4.09e-9, rse(test) = 4.44e-9, total training time 6,231.79 seconds. While the SSG model with all network structures listed in Table \ref{Structures-for-SSG} and various stopping criteria cannot reach the approximation accuracy that the SAL model does. The best result produced by the SSG model is rse(trian) = 4.23e-5, rse(test) = 4.31e-5, with training time 22,219.42 second, when the network structure is chosen as SSG-12 with epoch 7,000, (see, Table \ref{table: single results 200} for details).

Once again, these numerical results show that the SAL model converges as the number of grades increases. The error reduction demonstrated in Tables \ref{table:SAL-3-I} and \ref{table:SAL-3-II} confirms the theoretical results established in section 5.

\begin{table}[]
 \centering
 \caption{The SAL model with structure SAL-3 with  stopping criterion I for learning function \eqref{oscillatory function}}
    \label{table:SAL-3-I}
\footnotesize{
    \begin{tabular}{c||c|c|c|c|c|c}
    \hline 
       grade  &   $\tau$ &    $\epsilon$ &  iteration \#    & train time (second)& rse(train)  & rse(test) \\
     \hline   
       1 &  0&    1e-6 & 870 & 17.37 & 9.98e-1 & 9.93e-1\\
       2 &  0 & 1e-6 & 19,999 & 229.08 & 6.21e-3 & 5.98e-3\\
       3 & 5e-3 &  1e-6 & 19,999 &   374.49 & 3.22e-3 & 3.12e-3\\
    
       4 & 4e-3&  1e-7 & 19,999 &   366.92 & 7.68e-3 &  7.34e-3\\
   
       5 & 3e-3& 1e-7 & 29,999 &  484.86 &1.51e-4 &  1.43e-4\\
     
    6 & 3e-3&  1e-7 & 29,999 &  473.58 & 3.14e-5 &  3.14e-5\\
     
    7 & 2e-3&  1e-7 & 29,999 &  476.97 & 3.97e-6 &  4.35e-6\\
   
    8 & 1e-3&  1e-7 & 29,999 &  482.11 & 5.03e-7 &  5.35e-7\\
    
     9 & 1e-3& 1e-7 & 39,999 &  599.95 & 1.45e-7 &  1.45e-7\\
       
    10 & 1e-3&  1e-7 & 39,999 &  590.06 & {\bf 6.13e-8} &  {\bf 5.77e-8}\\ 
    \hline
    total time  &   \multicolumn{6}{|c}  {\bf 4,095.39}\\
    \hline
    \end{tabular}
     }
\end{table}

\begin{table}[]

\footnotesize{
    \centering
        \caption{The SAL model with structure SAL-3 with stopping criterion II for learning function \eqref{oscillatory function}}
    \label{table:SAL-3-II}
    \begin{tabular}{c||c|c|c|c|c|c}
    \hline
       grade  &   $\tau$ &   $\epsilon$ &  iteration \#    & train time (second)& rse(train)  & rse(test) \\
    \hline   
       1 &  0& 1e-6 & 870 & 17.35 & 9.98e-1 & 9.93e-1\\
   
       2 &  0 &  1e-6 & 49,999 & 562.60 & 2.69e-3 & 2.50e-3\\
   
       3 &  5e-3 & 1e-6 & 49,999 &  705.81 & 5.96e-4 & 5.89e-4\\
   
       4 & 4e-3& 1e-7 & 49,999 & 710.07 & 1.06e-4 &  1.03e-4\\
      
       5 & 3e-3& 1e-7 & 49,999 &  693.21  &1.33e-5 &  1.30e-5\\
       
    6 & 3e-3&  1e-7 & 49,999 &  710.72 & 1.36e-6 &  1.41e-6\\
     
    7 & 2e-3&  1e-7 & 49,999 &  703.58 & 1.36e-7 &  1.37e-7\\
   
    8 & 1e-3& 1e-7 & 49,999 &  705.44 & 1.92e-8 &  1.97e-8\\
   
     9 & 1e-3&  1e-7 & 49,999 &  701.53 & 7.11e-9 &  7.48e-9\\
      
    10 & 1e-3& 1e-7 & 49,999 &  721.48 & {\bf 4.09e-9} &  {\bf 4.44e-9}\\
    \hline
    total time  &   \multicolumn{6}{|c}  {\bf 6,231.79}\\
    \hline  
    \end{tabular}

    }
\end{table}

\begin{table}[]
   
\footnotesize{

    \centering
     \caption{The SSG model with width 50 for learning function \eqref{oscillatory function}}
    \label{table: single results 50}
    \begin{tabular}{c||c|c|c|c|c|c}
    \hline 
structure  & $\alpha$ &   $\epsilon$ & epoch  & train time (second)& rse(train)  & rse(test) \\
  \hline
SSG-1 &  1e-4& 1e-7 & 1,999 & 500.11 & 8.46e-2 & 8.00e-2\\
SSG-1 &  1e-4& 1e-7 & 4,999 & 1,258.62 & 8.00e-2 & 7.45e-2\\
SSG-1  &  1e-4& 1e-7 & 6,999 & 1,765.27 & 2.90e-3 & 2.83e-3\\
SSG-1  &  1e-4& 1e-7 & 9,999 & 2,546.10 & 1.65e-3 & 1.60e-3\\
 \hline
SSG-2 &  1e-4& 1e-7 & 1,999 & 777.48 & 3.13e-3 & 3.18e-4\\
SSG-2  &  1e-4& 1e-7 & 4,999 & 1,964.83 & 5.76e-4 & 5.93e-4\\
SSG-2  &  1e-4& 1e-7 & 6,999 &  2,761.31 & 2.56e-3 & 2.60e-3\\
SSG-2 &  1e-4& 1e-7 & 9,999 & 3,969.41 & 5.83e-4 & 5.98e-4\\
\hline
SSG-3 &  1e-4& 1e-7 & 1,999 & 1,069.02 & 4.47e-4 & 3.81e-4\\
SSG-3  &  1e-4& 1e-7 & 4,999 &  2,708.85 & 1.79e-4 & 1.64e-4\\
SSG-3  &  1e-4& 1e-7 & 6,999 & 3,804.06 & 8.00e-5 & 7.43e-5\\
SSG-3  &  1e-4& 1e-7 & 9,999 & 5,485.55 & 2.00e-4 & 2.08e-4\\
\hline
SSG-4  &  1e-4& 1e-7 & 1,999 & 1,350.66 & 3.05e-3 & 3.08e-3\\
SSG-4  &  1e-4& 1e-7 & 4,999 &  3,437.56 & 9.03e-5 & 8.66e-5\\
SSG-4 &  1e-4& 1e-7 & 6,999 & 4,844.77 & 9.70e-4 & 9.74e-4\\
SSG-4  &  1e-4& 1e-7 & 9,999 & 6,981.31 & 5.49e-3 & 5.33e-3\\
\hline
SSG-5 &  1e-4& 1e-7 & 1,999 & 1,484.32 & 9.82e-4 & 9.44e-4\\
SSG-5  &  1e-4& 1e-7 & 4,999 &  3,745.63 & 1.63e-4 & 1.63e-4\\
SSG-5  &  1e-4& 1e-7 & 6,999 & 5,278.43 & 7.96e-5 & 8.43e-5\\
SSG-5  &  1e-4& 1e-7 & 9,999 & 7,638.00 & 8.06e-4 & 8.67e-4\\
\hline
    \end{tabular}

    }
\end{table}

\begin{table}[]
 \centering
 \caption{The SSG model with width 100 for learning function \eqref{oscillatory function}}
    \label{table: single results 100}
\footnotesize{
   
    \begin{tabular}{c||c|c|c|c|c|c}
    \hline 
 structure  & $\alpha$ &   $\epsilon$ &  epoch    & train time (second)& rse(train)  & rse(test)\\
  \hline
SSG-6 &  1e-4& 1e-7 & 1,999 & 1,326.71 & 2.92e-3 & 3.03e-3\\
 
SSG-6 &  1e-4& 1e-7 & 4,999 & 3,425.84 & 2.83e-3 & 2.64e-3\\

SSG-6 &  1e-4& 1e-7 & 6,999 & 4,832.23 & 1.92e-3 & 1.88e-3\\

SSG-6 &  1e-4& 1e-7 & 9,999 & 7,005.57 & 2.45e-3 & 2.36e-3\\
 \hline
SSG-7 &  1e-4& 1e-7 & 1,999 & 1,918.29 & 2.42e-4 & 2.46e-4\\

SSG-7 &  1e-4& 1e-7 & 4,999 & 4,933.36 & 3.55e-4 & 3.70e-4\\
SSG-7 &  1e-4& 1e-7 & 6,999 &  6,993.76 & 1.93e-4 & 1.97e-4\\
SSG-7 &  1e-4& 1e-7 & 9,999 &  10,228.32 & 1.22e-3 & 1.23e-3\\
\hline
SSG-8 &  1e-4& 1e-7 & 1,999 & 2,664.25 & 2.59e-3 & 2.50e-3\\
SSG-8 &  1e-4& 1e-7 & 4,999 & 6,909.61 & 7.13e-5 & 7.28e-5\\
SSG-8 &  1e-4& 1e-7 & 6,999 &  9,848.36 & 3.77e-4 & 3.87e-4\\
SSG-8 &  1e-4& 1e-7 & 9,999 &  14,396.86 & 3.78e-4 & 3.77e-4\\
\hline
SSG-9 &  1e-4& 1e-7 & 1,999 &  3,436.43 & 9.64e-4 & 9.91e-4\\
SSG-9  &  1e-4& 1e-7 & 4,999 & 8,915.12 & 1.18e-3 & 1.17e-3\\
SSG-9  &  1e-4& 1e-7 & 6,999 & 12,705.80 & 5.18e-5 & 4.99e-5\\
SSG-9  &  1e-4& 1e-7 & 9,999 & 18,632.12 & 8.13e-4 & 9.03e-4\\
\hline
SSG-10 &  1e-4& 1e-7 & 1,999 & 4,083.72 & 1.54e-4 & 1.54e-4\\
SSG-10  &  1e-4& 1e-7 & 4,999 &  10,468.17 & 9.97e-4 &  9.18e-4\\
SSG-10  &  1e-4& 1e-7 & 6,999 & 14,824.08 & 5.06e-5 & 5.00e-5\\
SSG-10  &  1e-4& 1e-7 & 9,999 & 14,824.08 & 1.12e-3 & 1.22e-3\\
\hline
\end{tabular}
}
\end{table}

\begin{table}[]
    \centering
\caption{The SSG model with width 200 for learning function \eqref{oscillatory function}}
    \label{table: single results 200}
\footnotesize{
\begin{tabular}{c||c|c|c|c|c|c}
\hline 
structure  & $\alpha$ &   $\epsilon$ &  epoch    & train time (second)& rse(train)  & rse(test)\\
  \hline
SSG-11 &  1e-4& 1e-7 & 1,999 & 3,092.26 & 2.96e-3 & 2.90e-3\\
SSG-11  &  1e-4& 1e-7 & 4,999 & 8,172.53 & 8.14e-4 & 8.70e-4\\
SSG-11  &  1e-4& 1e-7 & 6,999 & 11,689.40 & 5.57e-4 & 5.74e-4\\
SSG-11  &  1e-4& 1e-7 & 9,999 & 17,307.35 & 6.24e-4 & 6.24e-4\\
 \hline
SSG-12 &  1e-4& 1e-7 & 1,999 & 5,847.95 & 5.25e-3 & 5.56e-3\\
SSG-12  &  1e-4& 1e-7 & 4,999 & 15,545.16 & 3.16e-3 & 3.30e-3\\
SSG-12  &  1e-4& 1e-7 & 6,999 &  22,219.42 & 4.23e-5 & 4.31e-5\\
SSG-12  &  1e-4& 1e-7 & 9,999 &  32,896.63 & 4.17e-4 & 4.47e-4\\
\hline
SSG-13 &  1e-4& 1e-7 & 1,999 &  8,663.59 & 2.55e-3 & 2.45e-3\\
SSG-13  &  1e-4& 1e-7 & 4,999 &  22,843.51 & 7.43e-5 & 7.76e-5\\
SSG-13  &  1e-4& 1e-7 & 6,999 &  32,676.66 & 3.18e-4 & 3.25e-4\\
\hline
SSG-14 &  1e-4& 1e-7 & 1,999 &  11,837.76 & 4.02e-2 &  4.28e-2\\
SSG-14 &  1e-4& 1e-7 & 4,999 &  31,609.42 & 8.92e-5 &  8.98e-5\\
SSG-14  &  1e-4& 1e-7 & 6,999 &  45,064.18 & 5.44e-4 &  5.34e-4\\
\hline
SSG-15 &  1e-4& 1e-7 & 1,999 & 12,887.15 & 3.88e-3 & 3.86e-3\\
SSG-15  &  1e-4& 1e-7 & 4,999 &  34,594.90 & 8.65e-5 & 8.64e-5\\
SSG-15 &  1e-4& 1e-7 & 6,999 &  49,549.28 & 2.87e-3 & 2.93e-3\\
\hline
\end{tabular}
    }
\end{table}

\begin{table}[]
 \caption{The SSG model with width 300 for learning function \eqref{oscillatory function}}
    \label{table: single results 300}
\footnotesize{
    \centering
    \begin{tabular}{c||c|c|c|c|c|c}
\hline
 structure  & $\alpha$ &   $\epsilon$ &  epoch    & train time (second)& rse(train)  & rse(test) \\
  \hline
SSG-16 &  1e-4& 1e-7 & 1,999 & 7,135.19 & 2.26e-3 & 2.31e-3\\
SSG-16  &  1e-4& 1e-7 & 4,999 & 18,729.82 & 7.75e-5 & 7.33e-5\\
SSG-16  &  1e-4& 1e-7 & 6,999 & 26,690.13 & 3.88e-4 & 4.10e-4\\
SSG-16  &  1e-4& 1e-7 & 9,999 & 39,600.80 & 2.59e-3 & 2.54e-3\\
\hline
SSG-17 &  1e-4& 1e-7 & 1,999 & 17,481.68 & 4.21e-4 & 4.19e-4\\
SSG-17  &  1e-4& 1e-7 & 4,999 & 46,000.92 & 1.04e-4 & 9.79e-5\\
\hline
SSG-18  &  1e-4& 1e-7 & 1,999 &  20,020.21 & 6.11e-3 & 6.03e-3\\
SSG-18  &  1e-4& 1e-7 & 4,999 &  52,578.57 & 1.20e-4 & 1.22e-4\\
\hline
SSG-19 &  1e-4& 1e-7 & 1,999 &  25,004.28 & 3.56e-2 &  2.46e-2\\
SSG-19 &  1e-4& 1e-7 & 4,999 &  66,193.35 & 4.88e-3 &  5.04e-3\\
\hline
SSG-20 &  1e-4& 1e-7 & 1,999 & 27,636.96 & 8.59e-3 & 8.48e-3\\
SSG-20 &  1e-4& 1e-7 & 4,999 & 73,692.73 & 1.02e-4 & 1.00e-4\\

\hline
\end{tabular}
   
    }
\end{table}

\section{Conclusive Remarks}

We have developed the SAL model to learn affine maps that define a DNN. Unlike the traditional deep learning model which solves one non-convex optimization problem to determine weight matrices and bias vector, the SAL model successively solves a sequence of {\it quadratic/convex} optimization problems, each of which defines one layer of a DNN. The proposed SAL model overcomes difficulties of the traditional deep learning model in solving a highly non-convex optimization problem with a large number of parameters for a DNN. The neural networks generated from the SAL model form an adaptive orthogonal basis, for a given function, which enjoys both the Pythagorean identity and the Parseval identity as the Fourier basis does. We further show the convergence result of the SAL model without pooling: Either the SAL process terminates after a finite number of grades or the optimal error function of a grade reduces in norm strictly from that of the previous grade toward a limit. Two proof-of-concept numerical examples presented in the paper demonstrate that the proposed SAL model outperforms significantly the standard single-grade deep learning model in training
time, training accuracy and prediction accuracy. 
Adoption of the SAL model to solving practical problems requires further investigation.
\bigskip

\noindent {\bf Acknowledgement:}
The author is indebted to graduate student Mr. Ronglong Fang for his assistance in coding for the numerical examples presented in section 8. This work is supported in part by US National Science Foundation under grants DMS-1912958  and DMS-2208386, and by  US National Institutes of Health under grant R21CA263876. 

\noindent {\bf Conflict of Interest Statement:} 
The author declared that there is no conflict of interest.

\end{document}